\newtheorem{theorem}{Theorem}
\newtheorem{property}{Property}
\newtheorem{definition}{Definition}
\title{DTW+S: Shape-based Comparison of Time-series with Ordered Local Trends}
\author {
    Ajitesh Srivastava
}
\begin{document}

\maketitle

\begin{abstract}
Measuring distance or similarity between time-series data is a fundamental aspect of many applications including classification and clustering. Existing measures may fail to capture similarities due to local trends (shapes) and may even produce misleading results. Our goal is to develop a measure that looks for similar trends occurring around similar times and is easily interpretable for researchers in applied domains. This is particularly useful for applications where time-series have a sequence of meaningful local trends that are ordered, such as in epidemics (a surge to an increase to a peak to a decrease). We propose a novel measure, DTW+S, which creates an interpretable ``closeness-preserving'' matrix representation of the time-series, where each column represents local trends, and then it applies Dynamic Time Warping to compute distances between these matrices. We present a theoretical analysis that supports the choice of this representation. We demonstrate the utility of DTW+S in ensemble building and clustering of epidemic curves. We also demonstrate that our approach results in better classification compared to Dynamic Time Warping for a class of datasets, particularly when local trends rather than scale play a decisive role.
\end{abstract}

\section{Introduction}

The distance between two time-series is a fundamental measure used in many applications, including classification, clustering, and evaluation. In classification and clustering, we want two ``similar'' time-series to have a low distance between them so that they can be grouped together or classified with the same label. In evaluation, the setting could be that we are generating projections (long-term forecasts) of time-series, and retrospectively, we wish to measure how close we are to the ground truth. 

While many measures exist for these purposes, including Euclidean distance, correlation, and dynamic time-warping (DTW)~\cite{muller2007dynamic}, the choice of the similarity measure can depend on the domain. Further, existing similarity measures may fail to capture the desired properties of the task at hand, for instance, when we wish to capture the similarity in trends over time. For example, consider the scenario presented in Figure~\ref{fig:motivation_scale}. Two models perform a projection to estimate the time-series given by the ground truth. Model 1 produces a pattern that is similar to the ground truth, while Model 2 produces a flat line. If we use mean absolute error to assess which model performed better, Model 2 (flat line) will receive a better score. Although Model 1 produces identical trends and correctly predicts the peak timing, it loses to a Model 2 which conveys no information. Now, consider the scenario presented in Figure~\ref{fig:motivation_shift}. Model 1 predicts the exact pattern but it slightly shifted in time. Again, Model 1 -- a flat line, produces a lower error.  Finally, in Figure~\ref{fig:motivation_bimodal}, Model 1 predict the overall pattern well, it only misjudges the height of the peaks. Yet, Model 2, a straight line, is considered closer to the ground truth. Some form of a range normalization could have addressed the issue in Figure~\ref{fig:motivation_scale}, and Dynamic Time Warping (DTW)~\cite{muller2007dynamic}, which allows stretching the time dimension to best match two time-series, can address the issue raised in Figure~\ref{fig:motivation_shift}. However, DTW and/or any normalization of scale cannot address the issue presented in Figure~\ref{fig:motivation_bimodal}.

\begin{figure*}
    \centering
    \subfigure[Similar trend different scale]{
    \includegraphics[width=0.31\textwidth,trim=0.7cm 0cm 0.5cm 0cm,clip]{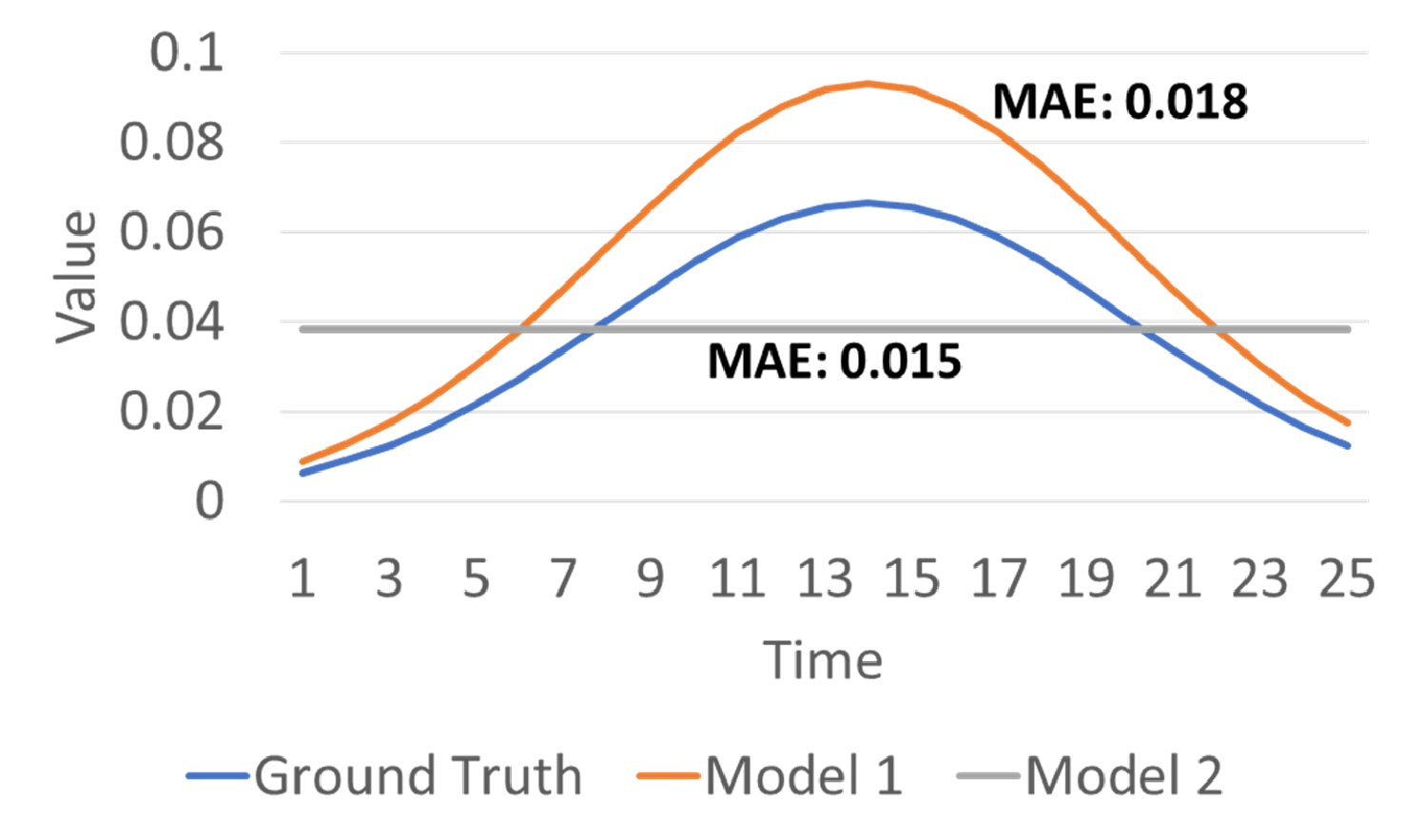}
    \label{fig:motivation_scale}
    }
    \subfigure[Similar scale, shifted in time]{
    \includegraphics[width=0.31\textwidth,trim=0.7cm 0cm 0.5cm 0cm,clip]{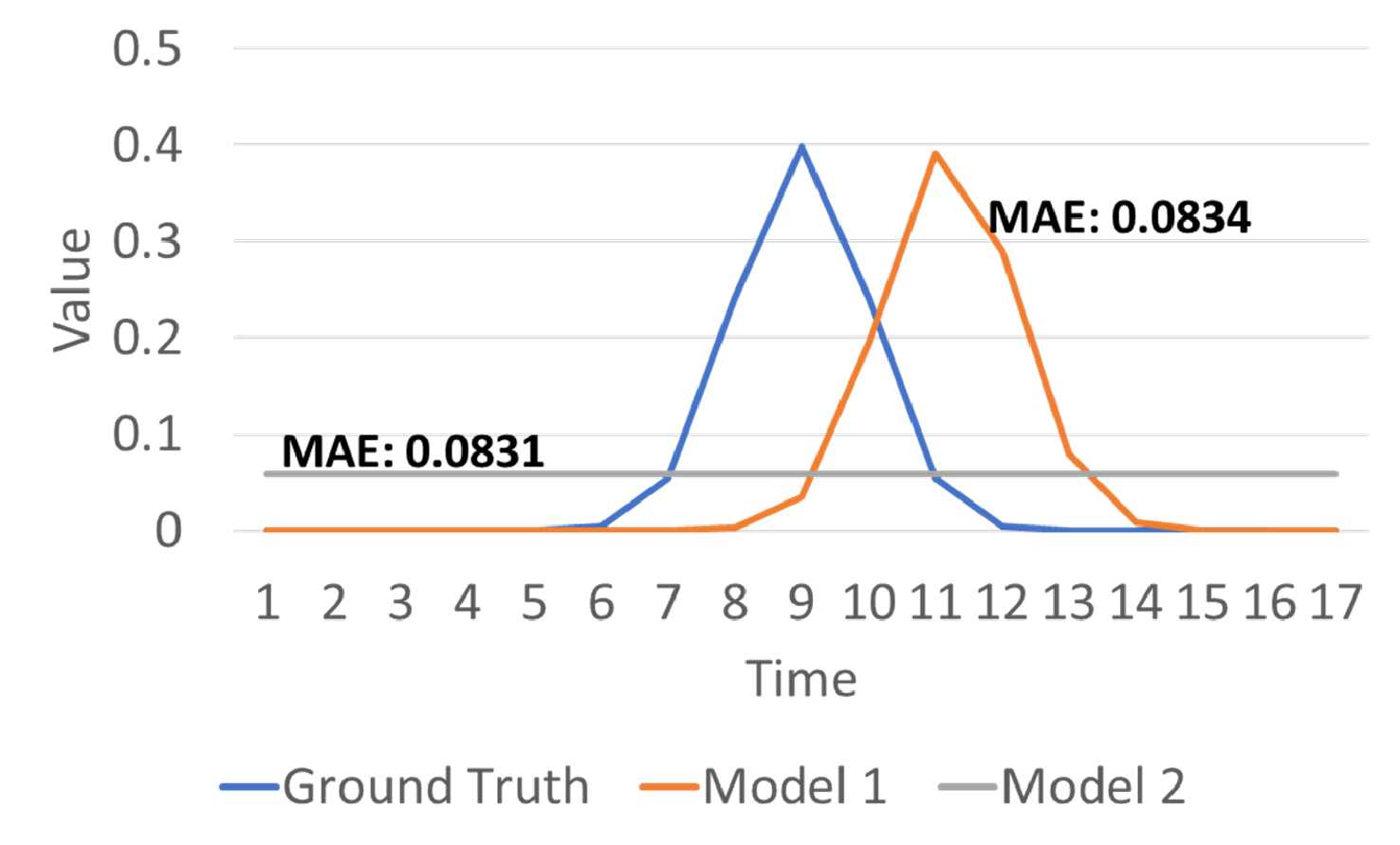}
    \label{fig:motivation_shift}
    }
    \subfigure[Similar trends with varying scales]{
    \includegraphics[width=0.31\textwidth,trim=0.7cm 0cm 0.5cm 0cm,clip]{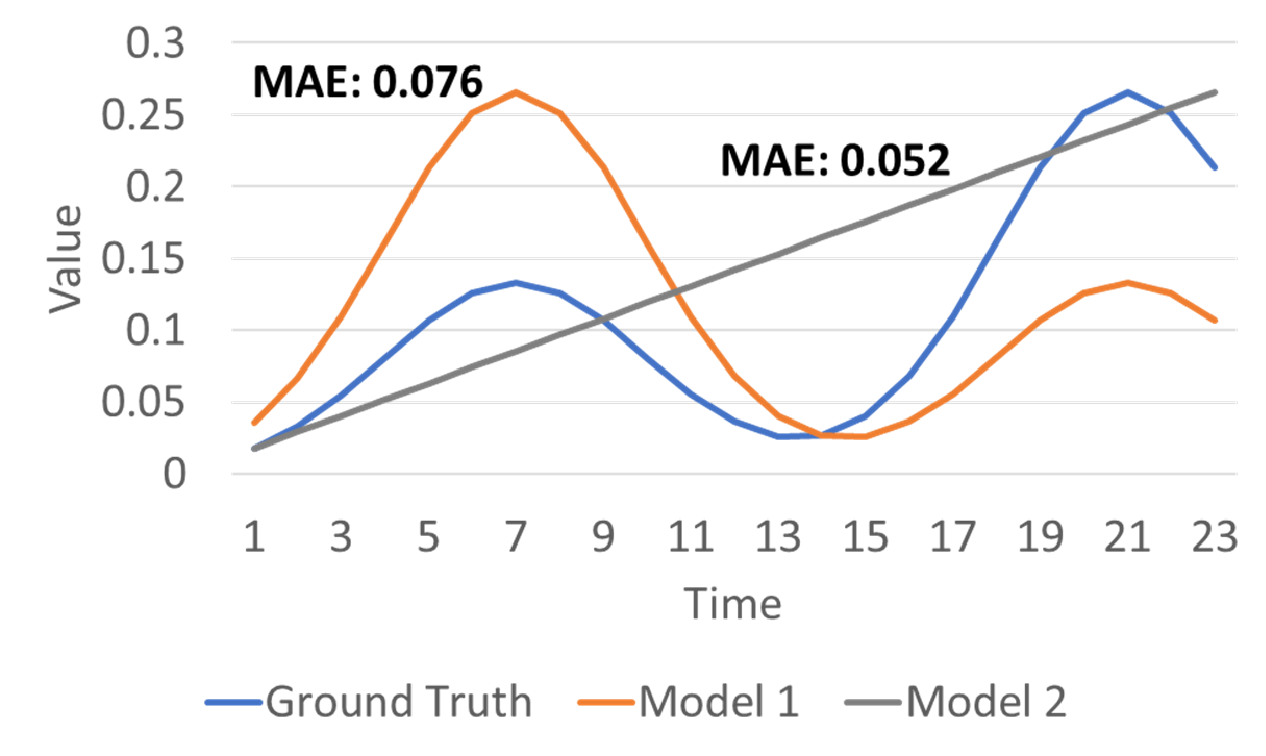}
    \label{fig:motivation_bimodal}
   } 
    \caption{Simple measures like Mean Absolute Error can be deceiving. In all these examples, Model 1 seems to be closer to the Ground truth, but receives a higher distance compared to a straight line.}
    \label{fig:enter-label}
\end{figure*}

Our goal is to develop a measure of distance such that two time-series are considered similar if and only if they have a similar sequence of trends and similar trends occur around similar times. This is particularly useful in public health where the time-series may represent meaningful local trends that are ordered, e.g., a surge, followed by an increase, then a peak, and finally a decrease.
We define a trend as the local shape of a time-series. Further, we wish this measure to be easily \textbf{interpretable} so that it can be adopted by researchers in applied domains, such as epidemiology.  To achieve this, we propose a novel distance measure DTW+S that (i) produces a matrix representation where each column encodes local trends, and (ii) uses Dynamic Time Warping on these matrices to compute distances between a pair of time-series. With this measure, we are able to perform better clustering and ensembling~\cite{SMH,fluSMH,ESMH} of epidemic curves where local interpretable trends are of interest.
While we do not intend to develop a new time-series classification algorithm, we believe a good distance measure should improve a simple distance-based classifier such as $k$-nearest neighbors~\cite{petitjean2014dynamic} on a class of tasks. Therefore, we also demonstrate the success of 1-nearest neighbor using our measure on classification.

\subsubsection{Contributions}
(1) We \textit{prove necessary and sufficient conditions} for the shapelet space representation~\cite{srivastava2022shape} to be neighborhood-preserving -- two local trends are similar if and only if they are mapped to nearby points (Section~\ref{sec:choosing_shapelets}).
(2) We propose a novel distance measure for time-series DTW+S that leverages this representation along with DTW to find if similar trends occur around similar times (Section~\ref{sec:DTW+S}).
(3) We develop an ensembling technique that combined DTW+S with barycenter averaging~\cite{petitjean2011global} that can simultaneously summarize time-series in scale and time (Section~\ref{sec:ens_gen}), and  we demonstrate its utility on epidemic curves (Section~\ref{sec:exp_ens}).
(4) We demonstrate that DTW+S results in more sensible clustering of epidemic curves (Section~\ref{sec:exp_cluster}). Also, DTW+S outperforms Dynamic Time Warping in classifying time-series on a subset of datasets, particularly, those where local trends play a key role in classification (Section~\ref{sec:exp_classify}).
\section{Related Work}

\subsection{Background}

\subsubsection{Shapelet Space Representation}
In~\cite{srivastava2022shape}, the idea of the shapelet space representation is introduced to compare short-term forecasts of epidemics. The motivation is to compare the shape of the forecasts rather than exact numerical values. Further, they wish to make the representation interpretable. Each dimension represents the similarity of the vector with one of the chosen shapes of interest, such as an increase $(1, 2, 3, 4)$ and peak $(1, 2, 2, 1)$. These shapes of interest are termed Shapelets. 
\begin{definition}[Shapelet]
 A shapelet $\mathbf{s} = [s_1, \dots, s_w] \in \mathbb{R}^w$ is a vector that represents a shape of interest.
\end{definition}
\begin{definition}[Shapelet-space Representation]
Given $d$ shapelets $\{\mathbf{s_1}, \dots \mathbf{s_d}\}$, a Shapelet-space Representation is a vector $\mathbf{x}$ as a $d$-dimensional point $P_x = (p_1, p_2, \dots, p_d)$ denoting the shape of  $\mathbf{x} \in \mathbb{R}^w$ and the co-ordinate $p_i = sim(\mathbf{x}, p_i)$ for some measure of similarity. The function $f: \mathbb{R}^w \rightarrow \mathbb{R}^d$ is the Shapelet-space Transformation.
\end{definition}
The similarity function is to be chosen in such a way that two shapes are considered similar if and only if one shape can be approximated by translation and scaling of the other. However, this may cause an issue -- when the shape is close to a ``flat'', small noise can cause it to become similar to other shapes. It is argued that there is an inherent concept of flatness in the domain of interest. For instance, in influenza when the number of hospitalizations is stable at a very low value, that shape is to be considered flat and not to be considered similar to any other shape when hospitalizations are higher. Therefore, a desirable property is the following.
\begin{property}[Closeness Preservation]\label{prop:2}
Two vectors have similar representation, if and only if (i) none of the vectors are ``almost flat'' and one can be approximately obtained by scaling and translating the other, or (ii) both vectors are ``almost flat''.  
\end{property}
They propose an approach that first identifies how similar a shape is to what we could consider ``flat'', and then updates the similarities of the shape with respect to other shapelets. 
For some constants $m_0, \beta \geq 0$, define ``flatness'' as 
$\phi = \exp(-\beta(m-m_0)),$ if $m > m_0$, otherwise $\phi=1$.
Here $m$ is the average absolute slope of the vector $\mathbf{x}$ whose shapelet-space representation is desired, i.e., if $\mathbf{x} = (x_1, x_2, x_3, x_4)$, then $m = (|x_2 - x_1| + |x_3-x_2| + |x_4 - x_3|)/3$. The constant $m_0$ enforces that a vector with a very small average absolute slope is considered flat and receives a $0$ similarity in all other dimensions.
The constant $\beta$ represents how quickly above the threshold $m_0$, the ``flatness'' should reduce. Now, the co-ordinates of shapelet-space representation are defined as
\begin{displaymath}
       sim(\mathbf{x}, \mathbf{s_i}) =
        \left\{\begin{array}{@{}cl}
                2\phi-1, & \text{if } \mathbf{s_i} \text { is ``flat''},\\
                (1-\phi)corr(\mathbf{x}, \mathbf{s_i}),   & \text{otherwise}.
        \end{array}\right.
\end{displaymath}
It is shown that this definition satisfies Closeness Preservation (Property~\ref{prop:2}) with $w$ or more shapelets including the ``flat'' shapelet. \textit{We prove that $w$ shapelets are not only sufficient but necessary to satisfy this property (Theorem~\ref{thm:necessary})}. We use Shapelet-space Representations of moving windows on the given time-series to capture local trends over time. 

\subsubsection{Dynamic Time Warping}
Dynamic Time Warping (DTW) is a distance measure between two time-series that allows warping (local stretching and compressing) of the time component so that the two time-series are optimally aligned. Given two time-series $\mathbf{a} = [a(1), a(2), \dots]$ and $\mathbf{b} = [b(1), b(2), \dots]$, the objective of DTW is to minimize $\sum_{i\leftrightarrow j} \mathcal{D}(a(i), b(j)$, for some distance measure $\mathcal{D}$, and where $i\leftrightarrow j$ represents aligning index $i$ of $\mathbf{a}$ with index $j$ of $\mathbf{b}$. The alignment is done under some constraints -- (1) if $a(i)$ and $b(j)$ are aligned then $a(i+1)$ cannot be aligned with $b(j')$ for some $j' < j$. (2) Every index is present in at least one alignment. (3) The first index of both time-series are aligned with each other. (4) The last index of both time-series are aligned with each other.
Further, a window constraint can be added~\cite{ratanamahatana2004making} suggesting that indices $i$ and $j$ can only be aligned if $|i-j| \leq w$, for some non-negative integer $w$. 

\subsubsection{Time-series Ensemble}
In applications like epidemic projection, multiple trajectories are generated using different methods or different initializations. Then, an ensemble is created which is then communicated to the public and policy makers~\cite{SMH}. These ensembles are designed to capture the mean value at time $t$, i.e., for $n$ trajectories $\mathbf{a}_1, \dots, \mathbf{a}_n$, where $\mathbf{a}_i = [{a}(1), \dots, a(n)]$ the ensemble is $\bar{a}(t) = \sum_{i=1}^n a(t)/n$. As an unintended consequence, informative aspects of individual trajectories may be lost. As an example, consider Figure~\ref{fig:mean_ens}. Two models produce almost identical projections but they are shifted in time, and they have the same peak. The ensemble produces a trajectory that has a peak that is significantly lower and wider than individual models. In public health communication, this can cause a misjudgment of the severity of the epidemic. While the ensemble correctly summarizes the expected outcome at time $t$, the reader tends to infer other information such as peak timing and severity. Our similarity measure can be used to build ensembles that better preserve the properties of the individual models.

\begin{figure}[!ht]
    \centering
    \includegraphics[width=\columnwidth]{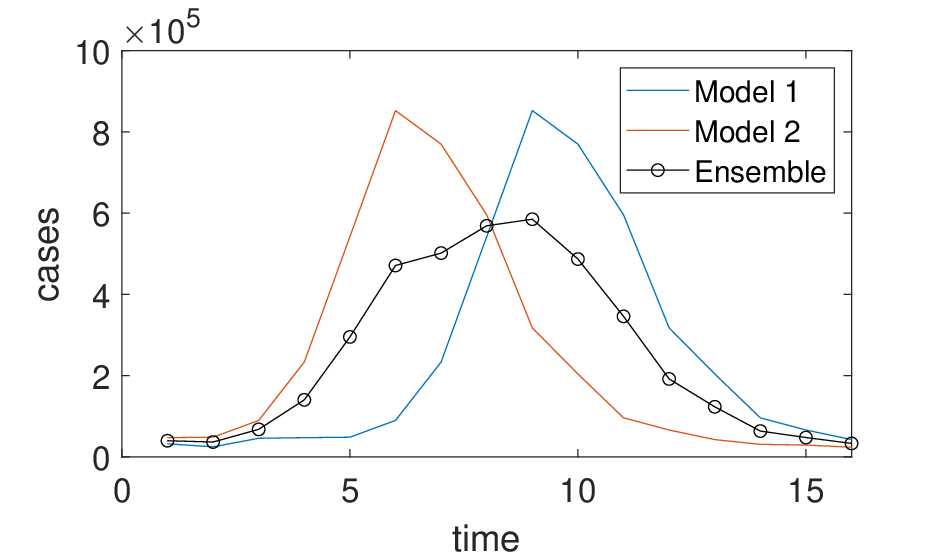}
    \caption{Failure of the mean ensemble in capturing the properties of individual time-series --  much lower peak.}
    \label{fig:mean_ens}
\end{figure}

\subsubsection{Shapelets} 
In time-series literature, ``shapelets'' have been used to refer to informative motifs that occur in time-series~\cite{ye2009time}. 
A feature vector for time-series can then be constructed by similarity of the best matching subsequence of the time-series to these motifs. The motifs are selected based on their representativeness of a class. In contrast, we use the term shapelet as \textit{a pre-determined shape of interest coming from the domain}. We prove that a specific class of shapelet sets and similarity measures is needed to develop a representation to satisfy the closeness-preserving property. We encode all local trends of the time-series into a matrix representation, which is compared using DTW.

\subsection{Related Similarity Measures}
While many similarity measures for time-series have been proposed in the literature~\cite{jeong2015support,lines2015time,dhamo2015comparing}, the closest work to our proposed measure DTW+S is ShapeDTW~\cite{zhao2018shapedtw}. It was developed as an alignment algorithm that captures point-wise local structures and preferentially aligns similarly-shaped structures. It does so by generating shape descriptors that include the raw sequence, piece-wise aggregation, Discrete Wavelet Transform~\cite{van2019discrete}, slopes, and derivatives. A key distinction from our approach is that we utilize a set of shapelets that are shapes of interest in the desired application, and hence our representation is directly interpretable. Further, we present theoretical results on the closeness-preserving characteristics of our approach. While shapeDTW is designed to be general-purpose by constructing a variety of shape descriptors, our approach is particularly designed for applications where the trend is more important than the scale. We later demonstrate that our approach still outperforms shapeDTW for classification on almost half of the 64 datasets considered (Section~\ref{sec:exp_classify}).

\section{Methodology}
\subsection{Definitions}
First, we define some terms used throughout. We start with the idea of a ``trend descriptor" that formally defines the idea of assigning arbitrary label (e.g., increase, decrease, peak, etc.) to a part of a time-series. This is intended to emulate a human using categories (implicit or explicit) to interpret a pattern in the time-series. This concept will help us define interpretability of a representation and similarity measure.
\begin{definition}[Trend Descriptor]
    A trend descriptor is a function $\mathcal{L}$ that maps any vector $\mathbf{x} \in \mathbb{R}^w$ to a label in set $L$ denoting a shape description of $\mathbf{x}$.
\end{definition}
For instance, a trend descriptor may map any given 4-element vector to one of slow increase, rapid increase, exponential increase, going to peak, going past a peak, rapid decrease, flat, and unknown. Some of these labels may be more similar to each other, e.g., slow and rapid increases are more similar to each other then rapid increase and decrease.


\begin{definition}[Local Trend]
    For a time-series $(a_1, a_2, \dots, a_T)$, a window $w$ and some trend descriptor $\mathcal{L}$ we define the local trend at location $i$ as $\mathcal{L}(a_i, \dots, a_{i+w-1})$.
\end{definition}

\begin{definition}[Interpretable]
    We say a representation is interpretable if it is possible to identify the local trends based on the values in each dimension of the representation.
\end{definition}

\begin{definition}[Ordered Local Trend]
    A class of time-series has ordered local trends if the order of the local trends appearing in the time-series conveys key information. In other words, the similarity between two time-series implies similarity between the sequences of local trends.
\end{definition}
In Figure~\ref{fig:motivation_bimodal} both orange and blue curves have the same sequence of local trends that can be described as a sequence of increases then peak, followed by a decrease, then a surge, and increase, another peak and decrease. While the gray line is only a sequence of increases. Such characterization is important in understanding and communicating long-term projections of epidemics as they represent specific events of interest~\cite{howerton2023informing,Flusight2023}. The interpretability and ordered local trends play a central role in our novel ensemble methods.
Before we use an alignment for ensembling, we develop the theory to be able to capture local trends (shapes) of the time-series for any implicit trend descriptor.
To do so, we extend the notion of Shapelet-space Representation to a time-series with a sliding window.

\begin{definition}[Shapelet-space Representation - Time-series] 
    Given a time-series $\mathbf{a} \in \mathbb{R}^{T_1}$, a window $w$, and a Shapelet-space Transformation $f:\mathbb{R}^w \rightarrow \mathbb{R}^d$, the Shapelet-space Representation of $\mathbf{a}$ is the matrix $\mathbf{A} \in \mathbb{R}^{d \times (T_1-w+1)}$ whose $i^{th}$ column is the Shapelet-space Representation of the vector $(a_i, \dots, a_{i+w-1})$.
\end{definition}
This matrix encodes how the time-series changes over time in an interpretable manner. Algorithm~\ref{alg:ts_shape} summarizes this approach.
For each time-series, $\mathbf{a}_i \in \mathbb{R}^{T_1}$, we first find its Shapelet-space Representation -- the matrix $\mathbf{A}_i \in \mathbb{R}^{d \times (T_1-w+1)}$. Each column (of size $d$) of these matrices is obtained by sliding a $w$-length window on the respective time-series and obtaining its $d$-dimensional Shapelet-space Representation (SSR). Figure~\ref{fig:SSR_example} shows the SSR obtained from a time-series. The SSR is built using four dimensions representing ``increase'', ``peak'', ``surge'', and ``flat''. A yellow color represents a high positive value and a blue represents a high negative value (e.g., a negative increase is a decrease). Also, note that ``surge'' and ``increase'' are similar shapes and hence seem to have a high correlation. The representation tells us that this time series has a sequence of surges/increases leading to a small peak (green in ``peak'' and ``flat'') around time step $5$, followed by stability, then increase, leading to a sharp peak (bright yellow around time-step $13$), followed by rapid decline (dark blue in ``inc'') and then stability (flatness).
\begin{algorithm}
\caption{Extracting shapelet space representation of time-series using a moving window}\label{alg:ts_shape}
\begin{algorithmic}
\Procedure{TimeSeriesShape}{$\mathbf{a}, \mathbf{S}$}
 \State $w \gets $ length of each shapelet in $\mathbf{S}$
 \State $T \gets $ length of the time-series $\mathbf{a}$
 \For{$t=1$ to $T-w+1$}
    \State $\mathbf{A}[:, t]$ = ShapeTransform($\mathbf{a}[i:i+w-1], \mathbf{S}$)
    \Comment{Transform a window of length w into the shapelet space}
 \EndFor
 \State \Return $\mathbf{A}$
\EndProcedure
\end{algorithmic}
\end{algorithm}


\begin{figure}[!ht]
    \centering
    \includegraphics[width=0.85\columnwidth]{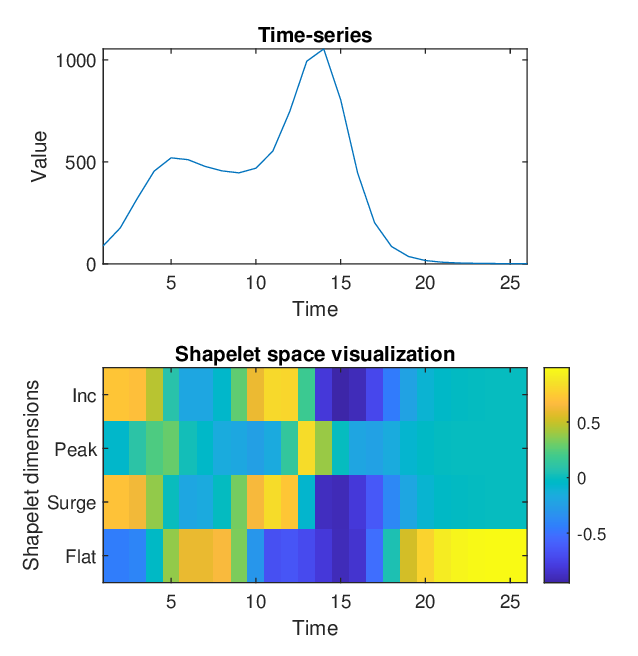}
    \caption{Shapelet-space Representation of a time-series.}
    \label{fig:SSR_example}
\end{figure}

\subsection{Choosing the Shapelet-Space}\label{sec:choosing_shapelets}
While Srivastava et. al. provide some indication of how to choose the set of shapelets, they only prove that for vectors with $w$ elements, $w$ shapelets are sufficient. While one may choose more number of shapelets, e.g., one for each trend descriptor in mind, having a large number ($d$) of shapelets also impacts the space and time complexities as both scale linearly with $d$. What is the minimum number of shapelets needed?
Here, we show that $w$ shapelets are not only sufficient but also necessary to satisfy the closeness-preserving property.
Let $f:\mathbb{R}^w \rightarrow \mathbb{R}^{d}$ be a shapelet transformation obtained by a set of linearly independent vectors $\mathbf{s_1}, \dots, \mathbf{s}_{d-1}$ and the flat vector $\mathbf{s}_0$.
Consider two vectors $\mathbf{x}$ and $\mathbf{y}$ of length $w$. Suppose $\mathbf{{x'}}$ and $\mathbf{{y'}}$ represent the corresponding normalized vectors obtained as:
    $
        \mathbf{x'} = \frac{\mathbf{x} - \mu_\mathbf{x}}{\|\mathbf{x}\|} \mbox{ and } \mathbf{y'} = \frac{\mathbf{y} - \mu_\mathbf{y}}{\|\mathbf{y}\|}\,,
    $
where $\mu_\mathbf{x}$ and $\mu_\mathbf{y}$ are the mean of elements in $\mathbf{x}$ and $\mathbf{y}$, respectively. The following can be shown (see Appendix for proofs).

\begin{theorem}\label{thm:sufficient}
Property~\ref{prop:2} is satisfied with any set of $w-1$ linearly independent shapelets and the ``flat'' shapelet, i.e., with this choice $\|f(\mathbf{x}) - f(\mathbf{y})\| \leq \epsilon$ iff (i) both $\mathbf{x}$ and $\mathbf{y}$ are ``almost'' flat, or (ii) $\|\mathbf{x'} - \mathbf{y'}\| \leq \delta$, for some small $\epsilon$ and $\delta$.
\end{theorem}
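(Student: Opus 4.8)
The plan is to reduce the Euclidean distance between the two shapelet-space representations to a pair of simpler scalars — a ``flatness discrepancy'' $|\phi_\mathbf{x}-\phi_\mathbf{y}|$ and a ``centered-shape discrepancy'' involving $\mathbf{x}'-\mathbf{y}'$ — and then argue by cases on how flat $\mathbf{x}$ and $\mathbf{y}$ are. Write $P=\{\mathbf{v}\in\mathbb{R}^w:\mathbf{v}^\top\mathbf{1}=0\}$, the $(w-1)$-dimensional hyperplane orthogonal to the constant vector $\mathbf{1}=\mathbf{s}_0$; every centered vector $\mathbf{x}-\mu_\mathbf{x}\mathbf{1}$ and every centered shapelet $\mathbf{s}_i-\mu_{\mathbf{s}_i}\mathbf{1}$ lies in $P$. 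Because $\{\mathbf{s}_0,\mathbf{s}_1,\dots,\mathbf{s}_{w-1}\}$ spans $\mathbb{R}^w$, the normalized centered shapelets $\tilde{\mathbf{s}}_i:=(\mathbf{s}_i-\mu_{\mathbf{s}_i}\mathbf{1})/\|\mathbf{s}_i-\mu_{\mathbf{s}_i}\mathbf{1}\|$ form a basis of $P$; collect them as the columns of $S\in\mathbb{R}^{w\times(w-1)}$. From the definition of $sim$, $f(\mathbf{x})=\big(2\phi_\mathbf{x}-1,\ (1-\phi_\mathbf{x})\,S^\top\mathbf{x}'\big)$, where $\mathbf{x}'$ is the normalized centered version of $\mathbf{x}$ (so $\|\mathbf{x}'\|\le 1$) and the last $w-1$ entries are the $(1-\phi_\mathbf{x})corr(\mathbf{x},\mathbf{s}_i)$. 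Hence $f(\mathbf{x})-f(\mathbf{y})$ has first entry $2(\phi_\mathbf{x}-\phi_\mathbf{y})$ and tail $S^\top\mathbf{v}$ with $\mathbf{v}:=(1-\phi_\mathbf{x})\mathbf{x}'-(1-\phi_\mathbf{y})\mathbf{y}'\in P$. Since $S^\top$ restricted to $P$ is a linear isomorphism onto $\mathbb{R}^{w-1}$, there are constants $0<\sigma_{\min}\le\sigma_{\max}$ (the extreme singular values of $S$, depending only on the chosen shapelets) with $\sigma_{\min}\|\mathbf{v}\|\le\|S^\top\mathbf{v}\|\le\sigma_{\max}\|\mathbf{v}\|$; thus $\|f(\mathbf{x})-f(\mathbf{y})\|$ is, up to these fixed constants, comparable to $\sqrt{(\phi_\mathbf{x}-\phi_\mathbf{y})^2+\|\mathbf{v}\|^2}$, and controlling it is equivalent to simultaneously controlling $|\phi_\mathbf{x}-\phi_\mathbf{y}|$ and $\|\mathbf{v}\|$. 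This is exactly the step that uses \emph{all} $w-1$ independent shapelets: with fewer, $S^\top|_P$ would have a nontrivial kernel and distinct shapes would map to the same point, breaking the ``only if'' direction — the content of the companion Theorem~\ref{thm:necessary}.

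For the ``only if'' direction, I would assume $\|f(\mathbf{x})-f(\mathbf{y})\|\le\epsilon$, so $|\phi_\mathbf{x}-\phi_\mathbf{y}|$ and $\|\mathbf{v}\|$ are both $O(\epsilon)$. If $\phi_\mathbf{x},\phi_\mathbf{y}\ge 1-\eta$ for the threshold $\eta$ defining ``almost flat'', this is case (i). Otherwise, say $\phi_\mathbf{x}\le 1-\eta$; then $\phi_\mathbf{y}\le 1-\eta+O(\epsilon)\le 1-\eta/2$ for $\epsilon$ small, so $1-\phi_\mathbf{x},1-\phi_\mathbf{y}\ge\eta/2$. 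Rearranging $\mathbf{v}=(1-\phi_\mathbf{x})(\mathbf{x}'-\mathbf{y}')-(\phi_\mathbf{x}-\phi_\mathbf{y})\mathbf{y}'$ and dividing by $1-\phi_\mathbf{x}$ gives $\|\mathbf{x}'-\mathbf{y}'\|\le\frac{1}{1-\phi_\mathbf{x}}\big(\|\mathbf{v}\|+|\phi_\mathbf{x}-\phi_\mathbf{y}|\big)=O(\epsilon/\eta)=:\delta$, i.e., case (ii); one moreover recovers $|\phi_\mathbf{x}-\phi_\mathbf{y}|$ small, consistent with the two vectors being near-scalings of one another.

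For the ``if'' direction, if both are almost flat then $1-\phi_\mathbf{x},1-\phi_\mathbf{y}\le\eta$ and $|\phi_\mathbf{x}-\phi_\mathbf{y}|\le\eta$, so $\|\mathbf{v}\|\le 2\eta$ and the flat coordinate differs by at most $2\eta$, giving $\|f(\mathbf{x})-f(\mathbf{y})\|=O(\eta)$. If instead neither is almost flat and $\|\mathbf{x}'-\mathbf{y}'\|\le\delta$, then $\|\mathbf{v}\|\le(1-\phi_\mathbf{x})\delta+|\phi_\mathbf{x}-\phi_\mathbf{y}|\le\delta+|\phi_\mathbf{x}-\phi_\mathbf{y}|$ and the flat coordinate differs by $2|\phi_\mathbf{x}-\phi_\mathbf{y}|$, so smallness of $\|f(\mathbf{x})-f(\mathbf{y})\|$ reduces to smallness of $|\phi_\mathbf{x}-\phi_\mathbf{y}|$. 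I expect this last point to be the real obstacle: $\phi$ is a function of the average absolute slope $m=\frac{1}{w-1}\sum_i|x_{i+1}-x_i|$, which is \emph{scale-dependent}, so ``$\mathbf{x}'$ close to $\mathbf{y}'$'' — a scale-invariant statement — does not by itself bound $|\phi_\mathbf{x}-\phi_\mathbf{y}|$, since two same-shaped vectors at wildly different magnitudes have wildly different slopes (and one may even be flat). Closing this gap requires the full hypothesis of Property~\ref{prop:2}(i): that $\mathbf{y}$ is approximately a \emph{boundedly} scaled and translated copy of $\mathbf{x}$ with neither almost flat, so that both slopes sit in the regime $m>m_0$ where $\phi=\exp(-\beta(m-m_0))$ is $\beta$-Lipschitz in $m$ and $m$ varies only by a controlled amount; equivalently, one restricts the ``if'' direction to vectors of comparable magnitude. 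Given the two-sided bound coming from $S^\top|_P$, the linear-algebra reduction and the ``only if'' casework are then routine.
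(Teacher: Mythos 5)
Your proposal is correct and follows essentially the same route as the paper: you split $f(\mathbf{x})-f(\mathbf{y})$ into the flat coordinate $2(\phi_\mathbf{x}-\phi_\mathbf{y})$ and the $(1-\phi)$-scaled correlation coordinates, use $\mathbf{1}^T(\mathbf{x'}-\mathbf{y'})=0$ together with invertibility of the centered-shapelet system to convert per-coordinate bounds into a bound on $\|\mathbf{x'}-\mathbf{y'}\|$ (your $\sigma_{\min}$ of $S$ restricted to the mean-zero hyperplane playing exactly the role of the paper's $\|C_w^{-1}\|$ for the augmented matrix $C_w$), and the rearrangement $\mathbf{v}=(1-\phi_\mathbf{x})(\mathbf{x'}-\mathbf{y'})-(\phi_\mathbf{x}-\phi_\mathbf{y})\mathbf{y'}$ is the same manipulation the paper performs dimension-wise. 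The scale-dependence obstacle you flag for the converse of case (ii) is genuine, but note that the paper's proof does not attempt that implication at all (it only shows ``both almost flat $\Rightarrow$ close representations'' and ``close representations and not both flat $\Rightarrow$ $\|\mathbf{x'}-\mathbf{y'}\|\leq\delta$''), so your account is, if anything, more explicit about the assumptions needed for the full ``iff'' in Property~\ref{prop:2}.
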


\begin{theorem}\label{thm:necessary}
    At least $w-1$ linearly independent shapelets are necessary with the ``flat'' shapelet to satisfy Property~\ref{prop:2}.
\end{theorem}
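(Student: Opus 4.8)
The plan is to prove the contrapositive: assume at most $w-2$ of the non-flat shapelets are linearly independent, and derive a violation of Property~\ref{prop:2} (thereby complementing Theorem~\ref{thm:sufficient}). The structural fact driving everything is already visible in the definition of $f$: for a non-flat input $\mathbf{x}$, the non-flat coordinates of $f(\mathbf{x})$ are $(1-\phi)\,\mathrm{corr}(\mathbf{x},\mathbf{s_i}) = (1-\phi)\,\mathbf{x'}\cdot\mathbf{s_i'}$, where $\mathbf{s_i'}$ is the mean-centered, unit-normalized shapelet. Both $\mathbf{x'}$ and every $\mathbf{s_i'}$ lie in the hyperplane $V = \{\mathbf{z}\in\mathbb{R}^w : \mathbf{z}\cdot\mathbf{1}=0\}$, which has dimension $w-1$, and $\mathrm{corr}$ is invariant under positive scaling of $\mathbf{x}$. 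Hence, up to the scalar $\phi$ (which itself only depends on the amplitude of $\mathbf{x}$), $f$ sees $\mathbf{x}$ only through the orthogonal projection of $\mathbf{x'}$ onto $W := \operatorname{span}\{\mathbf{s_1'},\dots,\mathbf{s_{d-1}'}\}$. Under the assumption, $\dim W \le w-2$, so $W$ is a \emph{proper} subspace of $V$: $f$ is blind to a whole direction of shape space, and we can exhibit two genuinely different shapes with nearly identical representations.

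Concretely I would fix a unit vector $\mathbf{v}\in V$ with $\mathbf{v}\perp W$. For $w\ge 3$, $\dim V\ge 2$, so pick a unit vector $\mathbf{u}\in V$ orthogonal to $\mathbf{v}$ and set $\mathbf{x'} = \tfrac{1}{\sqrt2}(\mathbf{u}+\mathbf{v})$ and $\mathbf{y'} = \tfrac{1}{\sqrt2}(\mathbf{u}-\mathbf{v})$; these are distinct unit vectors of $V$ with $\mathbf{x'}\ne\pm\mathbf{y'}$, $\|\mathbf{x'}-\mathbf{y'}\| = \sqrt2$, and $\mathbf{x'}-\mathbf{y'}=\sqrt2\,\mathbf{v}\perp W$, so $\mathbf{x'}\cdot\mathbf{s_i'} = \mathbf{y'}\cdot\mathbf{s_i'}$ for every $i$. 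Now scale: let $\mathbf{x} = c\,\mathbf{x'}$ and $\mathbf{y} = c\,\mathbf{y'}$. The average absolute slope of each grows linearly in $c$ while the flatness $\phi$ decays exponentially, so for $c$ large enough both $\mathbf{x}$ and $\mathbf{y}$ are non-flat and $\phi_{\mathbf{x}},\phi_{\mathbf{y}}$ are as small as we wish; since $\mathrm{corr}(c\mathbf{x'},\mathbf{s_i}) = \mathbf{x'}\cdot\mathbf{s_i'} = \mathbf{y'}\cdot\mathbf{s_i'} = \mathrm{corr}(c\mathbf{y'},\mathbf{s_i})$ and the flat coordinate is $2\phi-1$, the vectors $f(\mathbf{x})$ and $f(\mathbf{y})$ differ by $O(\max(\phi_{\mathbf{x}},\phi_{\mathbf{y}}))$ in every coordinate, hence $\|f(\mathbf{x})-f(\mathbf{y})\|\le\epsilon$ once $c$ is large. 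But $\mathbf{x}$ is not an approximate scaling/translation of $\mathbf{y}$ (their normalized shapes stay a distance $\sqrt2$ apart, and remain inequivalent even allowing sign flips since $\mathbf{x'}\ne-\mathbf{y'}$), and neither is almost flat, so both clauses of Property~\ref{prop:2} fail while the representations are $\epsilon$-close — the desired contradiction. Finally, spanning the $(w-1)$-dimensional $V$ requires at least $w-1$ vectors, so Property~\ref{prop:2} forces at least $w-1$ linearly independent non-flat shapelets; the degenerate case $w=2$ ($V$ one-dimensional, where one non-flat shapelet is needed to tell an increasing window from its reflection) follows from the same construction with $\mathbf{x'}=\mathbf{v}$, $\mathbf{y'}=-\mathbf{v}$.

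The main obstacle is the interface with the ``almost flat'' / $\epsilon$--$\delta$ phrasing of Property~\ref{prop:2}: because $\phi$ is never exactly $0$, the two representations cannot be made to coincide exactly but only in the large-amplitude limit, so the argument must carefully (i) verify that large amplitude is consistent with being non-flat, (ii) use scale-invariance of $\mathrm{corr}$ so the limits of $f(c\mathbf{x'})$ and $f(c\mathbf{y'})$ agree, and (iii) track the constant relating $\max_i|\phi_{\mathbf{x}}-\phi_{\mathbf{y}}|$, the number of shapelets, and $\epsilon$. A milder secondary point is pinning down the convention for ``same shape up to scaling and translation''; choosing $\mathbf{x'},\mathbf{y'}$ as a symmetric pair about $\mathbf{u}$ rather than the antipodal pair $\pm\mathbf{v}$ keeps the two shapes inequivalent even if negative scalings are permitted, so the proof is robust to that modeling choice for all $w\ge3$.
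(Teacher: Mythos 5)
Your proposal is correct and takes essentially the same route as the paper: both arguments rest on the observation that with only $w-2$ linearly independent shapelets there is a unit direction in the zero-mean hyperplane orthogonal to every centered shapelet (the paper finds it as the one-dimensional null space of $C_{w-1}$ via rank--nullity), and your pair $\mathbf{x'}=\tfrac{1}{\sqrt2}(\mathbf{u}+\mathbf{v})$, $\mathbf{y'}=\tfrac{1}{\sqrt2}(\mathbf{u}-\mathbf{v})$ is exactly an instance of the paper's reflected pair $\mathbf{y'}=\mathbf{x'}-2(\mathbf{u}^T\mathbf{x'})\mathbf{u}$. The only real difference is in the finish: the paper exhibits an exact collision of representations, whereas you produce an $\epsilon$-approximate one by scaling the amplitude so both flatness values vanish --- which, if anything, treats the flat coordinate more carefully than the paper's construction does.
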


Next, we show that the proper choice of shapelets results in the desired expressive power of distinguishing any two labels of an arbitrary trend descriptor.

\begin{theorem}[Expressive power]
    A proper choice of Shapelet Space Transformation can discriminate any trend descriptor.
\end{theorem}

\begin{proof}
Due to the closeness preserving property, it follows that with $w$ shapelets as described by Theorems~\ref{thm:sufficient} and~\ref{thm:necessary}, we can distinguish between any two local trends taken from an arbitrary choice of scale-free trend descriptor $\mathcal{L}$. By scale-free, we mean a trend descriptor that does not distinguish based on scale, e.g., ``high increase" vs ``very high increase". On the other hand,
it should be noted that we do not completely ignore the scale information. The flat dimension can be rewritten as $sim(\mathbf{x}, \mbox{flat}) = 2\phi - 1 = 2\exp(-\beta m)$, choosing $m_0 = 0$. Given, the value in the flat dimension, we can uniquely find the average absolute slope $m$ and thus we are able to discriminate scale-based trend descriptors as well. Therefore, by choosing the $w$ shapelets appropriately, we can discriminate any two local trends taken from an arbitrary choice of $\mathcal{L}$. 
\end{proof}

\noindent\textbf{Remarks.} Any representation obtained by a function that maps the input into an element of a finite set cannot have the desired expressive power. Suppose the cardinality of the range of this function is $k$. Then, any trend descriptor of $k+1$ labels will have at least two trends that cannot be distinguished by this function.
For instance, consider a transformation from a vector to two categories based on the positive and negative slope (increase vs decrease). Then, a ``peak" cannot be distinguished from either. While the shapes of interest may be pre-determined, we wish to be able to distinguish them regardless of their precise definition. Further, note that the simple measure of absolute error also has the same expressive power, but it may also discriminate between the same local trends (both ``increasing'' but with slightly different slopes). Shapelet space transformation puts more emphasis on scale-free discrimination.


It follows that if we align two time-series that have the same ordered local trends but possibly at different times, Dynamic Time Warping on their SSR (DTW+S) will be able to align them, leading to a low distance (high similarity).

\subsection{Time-series Similarity: DTW+S}\label{sec:DTW+S}
Recall that our goal is to consider two time-series similar if they have similar shapes around the same time, but not necessarily \textit{at} the same time.
To allow this, we use Dynamic Time Warping where the distance between aligning column $i$ with column $j$ is given by the square of the Euclidean distance between the $i^{th}$ column of $\mathbf{A}$ and $j^{th}$ column of $\mathbf{B}$, i.e.,  $\|\mathbf{A}[:, i] - \mathbf{B}[:, j]\|^2$. 
Figure~\ref{fig:explanation} provides a visualization of SSR matrices. Details of the interpretation are provided in Section~\ref{sec:explain}.
The choice of warping window $\tau$ depends on the application. For a classification task, it can be treated as a hyper-parameter and identified through validation on a held-out set. For epidemics, suppose, two models generate projections under the same assumptions, they may be predicting multiple peaks. Far-away peaks may refer to different events (e.g., two different variants causing two waves). However, peaks that occur 4-5 weeks apart across models may be referring to the same event. Therefore, $\tau = 5$ weeks would be a reasonable choice.

\subsection{Ensemble Generation}\label{sec:ens_gen}
The existing ensemble methods are designed to aggregate individual projections over time, thus measuring the scale (e.g., number of hospitalizations) at time $t$. They are not designed to aggregate when will an event (e.g., a peak) take place. However, a viewer tends to interpret both the scale and timing from the ensemble plot. We are interested in -- given $n$ time-series, $\mathbf{a}_i = [a_i(1), a_i(2), \dots, a_i(T)], i\in \{1, \dots, n\}$, find an ``ensemble'' time-series that captures the aggregate behavior in both time and scale. 
To address this, we assume that \textit{each trajectory tries to estimate a sequence of latent ``events''}. With this perspective, for some sequence of event $e_1, e_2, \dots$, a time-series captures the timing of $e_j$ and its scale. Therefore, the time-series can be interpreted as $\mathbf{a}_i = [(t_i(e_1), a_i(e_1)), (t_i(e_2), a_i(e_2)), \dots]$. For any given event $e_j$, the aggregate time-series can be obtained by averaging both the timing and the severity dimensions:
\begin{equation}\label{eqn:ensemble}
    \left(\bar{t}(e_j), \bar{a}(e_j)\right) = \left(\frac{\sum_i t_i(e_j)}{n}, \frac{\sum_i a_i(e_j)}{n}\right)
\end{equation}

However, we do not observe these ``events'' explicitly. We define an event to be reflected in the time-series by a local trend. When two time-series are aligned by DTW+S, each alignment corresponds to an event. Formally, in the shapelet space representation $\mathbf{A}$ and $\mathbf{B}$, if columns $\tau_1$ and $\tau_2$ are aligned, then the local trend at time $[\tau_1, \tau_1+w-1]$ in time-series $\mathbf{a}$ and that at time $[\tau_2, \tau_2+w-1]$ in time-series $\mathbf{b}$ are defined to be corresponding to the same ``event''. Suppose we use DTW+S to align $n$ projections. Then, each projection $i$ contributes one point $(t_i(j), a_i(t_j))$ for each alignment $j$. This results in a set of points, one for each alignment $j$, using Equation~\ref{eqn:ensemble}. Finally, if desired, we can interpolate these points to estimate the value of $\bar{a}(t)$ for $t \in \{1, 2, \dots, T\}$.
Note that this approach is based on the following assumptions. First, each time-series has a similar sequence of shapes but may differ in timing and severity/scale. Second, the interpolation assumes smoothness in the desired ensemble.

\begin{figure}
    \centering
    \includegraphics[width = 0.85\columnwidth]{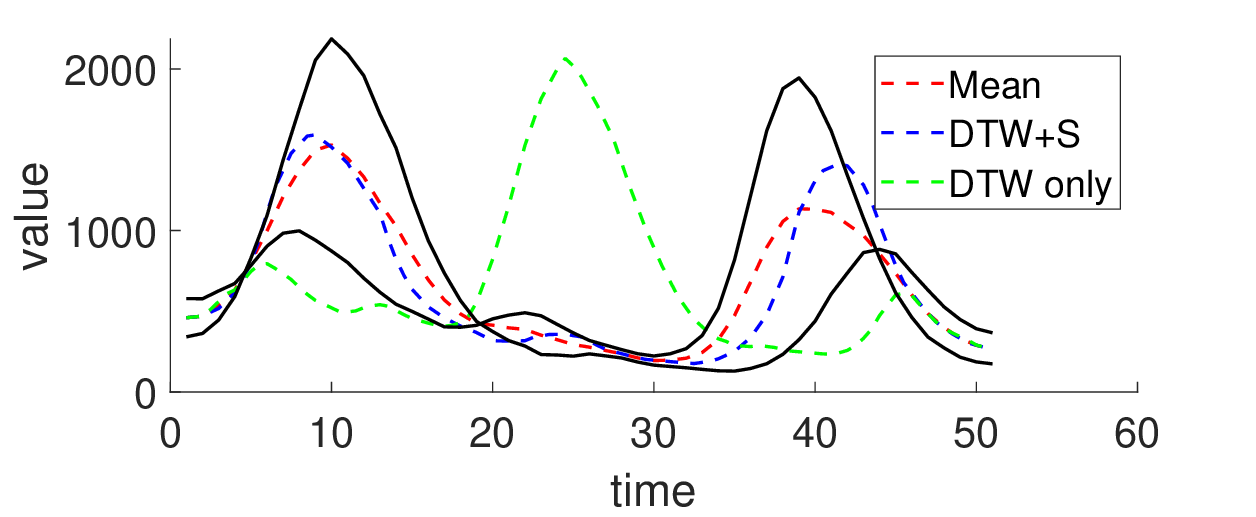}
    \caption{Applying mean, DTW, and DTW+S to develop ensemble of two time-series.}
    \label{fig:ens_DTW}
\end{figure}

Figure~\ref{fig:ens_DTW} demonstrates this approach for two time-series (in black solid line) that have two peaks each with different scales. We compare our approach against the mean ensemble and DTW (without SSR). Note that the mean ensemble results in a small second peak which is closer to the smaller peak among the input time-series. Further, its timing is biased towards the larger peak. The DTW ensemble results in one large peak by aggregating scale and timing of the first peak of one and the second peak of the other time-series. The DTW+S ensemble results in two peaks as expected, where each peak correctly averages the corresponding timing and scale of the input time-series.
While similar aggregation has been discussed in the literature~\cite{gupta1996nonlinear,petitjean2011global}, two key differences exist. First, our application is motivated by visual interpretation of ``events'', such as when the epidemic peaks. Second, our approach uses an interpretable shape-based measure instead of directly using the Euclidean distance on time-series. This allows us to define an ``event''.

\begin{algorithm}[!ht]
\caption{Barycenter Averaging with DTW on Shapes}\label{alg:DTW+SBA}
\begin{algorithmic}
\Procedure{DTW+SBA}{$\mathbf{a}, \mathbf{b}, \mathbf{S}, \tau$}

\For{$i=1$ to $n$}
        \State $\mathbf{A_i} \gets$ TimeSeriesShape($\mathbf{a}_i, , \mathbf{S}$)
\EndFor
\State $\mathbf{b} \gets$ medoid($\mathbf{a}_1, \mathbf{a}_2, \dots, \mathbf{a}_n$)
\While {not converge}
    \State $\mathbf{B} \gets $ TimeSeriesShape($\mathbf{b}, \mathbf{S}$)
    \For{$i=1$ to $n$}
        \State $\mathbf{t}_i\gets$ DTW\_align($\mathbf{A}_i,\mathbf{B}, \tau$)
        \Comment{$\mathbf{t}_i$ holds the indices of $\mathbf{A}_i$ aligned to $\mathbf{B}[:, 1], \mathbf{B}[:, 2], \dots,
\mathbf{B}[:, n]$.}
    \EndFor
    \For{$j = 1$ to $T$}
        \State $\mathbf{b}[j] \gets $ mean($\mathbf{a}[\mathbf{t}_1[j]], \mathbf{a}[\mathbf{t}_2[j]], \dots, \mathbf{a}[\mathbf{t}_n[j]]$) 
    \EndFor
\EndWhile

\State \Return $\mathbf{b}$
\EndProcedure
\end{algorithmic}
\end{algorithm}

Optimally aligning multiple time-series is NP-Hard with some approximation algorithms including DTW Barycenter Averaging~\cite{petitjean2011global}. In this approach, the initial `base' time-series is selected as the time-series among the set of trajectories that has the lowest distance from other trajectories. Then, in each iteration, we compute the pairwise alignment of all time-series with respect to the base time-series and the result becomes the new `base' time-series. The alignment is traditionally computed using DTW, which we replace with DTW+S. Algorithm~\ref{alg:DTW+SBA} summarizes this approach.
We demonstrate that the DTW+S based Barycenter Averaging better captures the properties of the trajectories (Section~\ref{sec:exp_ens}).

\subsection{Clustering and Classification}
\label{sec:class_and_clust}
We can use DTW+S to cluster time-series with any clustering or classification algorithm that allows customizable distance measures. As a demonstration, we use agglomeration clustering~\cite{day1984efficient} on the distance matrix where each entry $\mathcal{D}(\mathbf{a}_{i}, \mathbf{a}_{i'})$ is the DTW+S distance of time-series $i_1$ and $i_2$. The algorithm starts with each time-series as its own cluster, and then recursively merges clusters greedily based on the distances. We use Silhouette Coefficient~\cite{zhou2014automatic} to decide the optimal number of clusters. 
For classification, we use the $1$-nearest neighbor method~\cite{petitjean2014dynamic}. The choice was made so that the decisive factor in correct classification is the distance measure. That is, two time-series that are closest to each other belong to the same class. This is also a popular way of evaluating distance measures between time-series~\cite{mueen2011logical}.

\section{Experiments}
We conduct a series of experiments
to demonstrate the utility of DTW+S. Specifically, we wish to demonstrate the following:  (1) DTW+S BA leads to a more reasonable ensemble that captures the scale and timing of events. (2) DTW+S results in a more reasonable clustering compared to several other approaches.(3) DTW+S produced better classification results for many classification tasks, outperforming DTW.
In all of our experiments, unless stated otherwise, we used the following set of shapelets: (i) `increase': $[1, 2, 3, 4]$, (ii) `surge': $[1, 2, 4, 8]$, (iii) `peak': $[1, 2, 2, 1]$, and (iv) `flat': $[0, 0, 0, 0]$. According to Theorems~\ref{thm:sufficient} and~\ref{thm:necessary}, these shapelets satisfy Property~\ref{prop:2}. These were chosen because they are easily interpretable, particularly in the domain of epidemics. 
Some other sets of shapelets that satisfy Property~\ref{prop:2} were also tried, and their results were not significantly different. 


\subsection{Ensembling}
\label{sec:exp_ens}

\begin{table}[!ht]
\centering
\footnotesize
\caption{Fractional error in estimation of peak size and timing. Darker red means higher error.}
     \label{tab:my_label}
     \setlength\tabcolsep{3pt}
\begin{tabular}{|c|c|c|c|c|c|}
\hline
                           & Method & \makecell{Mean\\ensemble}	& DTW  BA	& \makecell{DTW \\(z-norm) BA}	& \makecell{\textbf{DTW+S}\\\textbf{BA}}      \\
\hline
\multirow{7}{*}{\rotatebox[origin=c]{90}{Peak Size}} & Set 1  & -0.37\cellcolor{red!37} & -0.04 \cellcolor{red!4}  & -0.01 \cellcolor{red!1}  & -0.02 \cellcolor{red!2} \\
                           & Set 2  & -0.29\cellcolor{red!29} & -0.33 \cellcolor{red!33} & -0.29 \cellcolor{red!29} & -0.01 \cellcolor{red!1} \\
                           & Set 3  & -0.38\cellcolor{red!38} & -0.31\cellcolor{red!31}  & -0.38\cellcolor{red!38}  & -0.03\cellcolor{red!3}  \\
				& Set 4  & -0.28\cellcolor{red!28} & -0.22\cellcolor{red!22}  & -0.28\cellcolor{red!28}  & -0.01\cellcolor{red!1}  \\
				& Set 5  & -0.38\cellcolor{red!38} & -0.23\cellcolor{red!23}  & -0.38\cellcolor{red!38}  & -0.03\cellcolor{red!3}  \\
				& Set 6  & -0.28\cellcolor{red!28} & -0.21\cellcolor{red!21}  & -0.29\cellcolor{red!29}  & -0.02\cellcolor{red!2}  \\
				& Set 7  & -0.37\cellcolor{red!37} & -0.18\cellcolor{red!18}  & -0.37\cellcolor{red!37}  & -0.03\cellcolor{red!3}  \\
              & Set 8 & -0.11\cellcolor{red!11} & -0.13\cellcolor{red!13} & -0.11\cellcolor{red!11} & -0.02\cellcolor{red!2} \\
& Set 9 & -0.11\cellcolor{red!11} & -0.13\cellcolor{red!13} & -0.11\cellcolor{red!11} & -0.02\cellcolor{red!2} \\
& Set 10 & -0.12\cellcolor{red!12} & -0.14\cellcolor{red!14} & -0.12\cellcolor{red!12} & -0.02\cellcolor{red!2} \\
& Set 11 & -0.11\cellcolor{red!11} & -0.10\cellcolor{red!10} & -0.11\cellcolor{red!11} & -0.02\cellcolor{red!2} \\
& Set 12 & -0.12\cellcolor{red!11} & -0.11\cellcolor{red!11} & -0.12\cellcolor{red!12} & -0.02\cellcolor{red!2} \\

			\hline
\multirow{7}{*}{\rotatebox[origin=c]{90}{Peak Timing}}                & Set 1  & 0.08\cellcolor{red!8}   & -0.05 \cellcolor{red!5}  & -0.01 \cellcolor{red!1}  & 0.08 \cellcolor{red!8}  \\
                           & Set 2  & -0.08\cellcolor{red!8}  & -0.10 \cellcolor{red!10} & -0.08 \cellcolor{red!8}  & -0.03 \cellcolor{red!3} \\
                           & Set 3  & -0.15\cellcolor{red!15} & -0.11\cellcolor{red!11}  & -0.15\cellcolor{red!15}  & 0.07\cellcolor{red!7} \\
				& Set 4  & -0.09\cellcolor{red!9} & -0.08\cellcolor{red!8}  & -0.09\cellcolor{red!9}  & -0.03\cellcolor{red!3}  \\
				& Set 5  & 0.10\cellcolor{red!10} & -0.09\cellcolor{red!9}  & -0.10\cellcolor{red!10}  & 0.07\cellcolor{red!7}  \\
				& Set 6  & -0.09\cellcolor{red!9} & -0.08\cellcolor{red!8}  & -0.09\cellcolor{red!9}  & -0.02\cellcolor{red!2}  \\
				& Set 7  & -0.10\cellcolor{red!10} & -0.08\cellcolor{red!8}  & -0.10\cellcolor{red!10}  & 0.07\cellcolor{red!7}  \\
 & Set 8 & -0.11\cellcolor{red!11} & -0.17\cellcolor{red!17} & -0.11\cellcolor{red!11} & 0.05\cellcolor{red!5} \\
& Set 9 & -0.13\cellcolor{red!13} & -0.18\cellcolor{red!18} & -0.13\cellcolor{red!13} & 0.03\cellcolor{red!3} \\
& Set 10 & -0.13\cellcolor{red!13} & -0.18\cellcolor{red!18} & -0.13\cellcolor{red!13} & 0.02\cellcolor{red!2} \\
& Set 11 & -0.14\cellcolor{red!14} & -0.09\cellcolor{red!9} & -0.14\cellcolor{red!14} & 0.02\cellcolor{red!2} \\
& Set 12 & -0.15\cellcolor{red!15} & -0.10\cellcolor{red!10} & -0.15\cellcolor{red!15} & 0.01\cellcolor{red!1} \\
\hline  
\end{tabular}
\end{table}

\noindent\textbf{Datasets:} We consider 12 sets of trajectories, for comparing our ensemble method with baselines. These sets include the following.
\underline{Set 1}: 75 trajectories from a model for Influenza hospitalization. The length of each trajectory is 26 (weeks). 
\underline{Set 2-7}: A total of 1000 trajectories (per set) from 10 influenza models for 6 scenarios each, extracted from Influenza Scenario Modeling Hub Round 4. The length of each trajectory is 39.
\underline{Set 8-12}: A total of around 1100 trajectories (per set) from 11 RSV models for six scenarios each extracted from RSV Scenario Modeling Hub Round 1. The length of each trajectory is 26 (weeks).

\noindent\textbf{Methods:}
We compare the following approaches for ensembling. 
\underline{Mean ensemble}: the most popular ensemble approach that simply averages values at each time-point~\cite{SMH,ESMH};
\underline{DTW BA}: barycenter averaging with DTW;
\underline{DTW (z-norm) BA}: barycenter averaging with DTW after applying z-normalization to all trajectories; and
\underline{DTW+S BA}: barycenter averaging with DTW+S.

\begin{figure}[!ht]
    \centering
    \subfigure[Ensemble comparison]{
    \includegraphics[trim={0.8cm 0.1cm 0.8cm 0.2cm},clip,width=0.6\columnwidth]{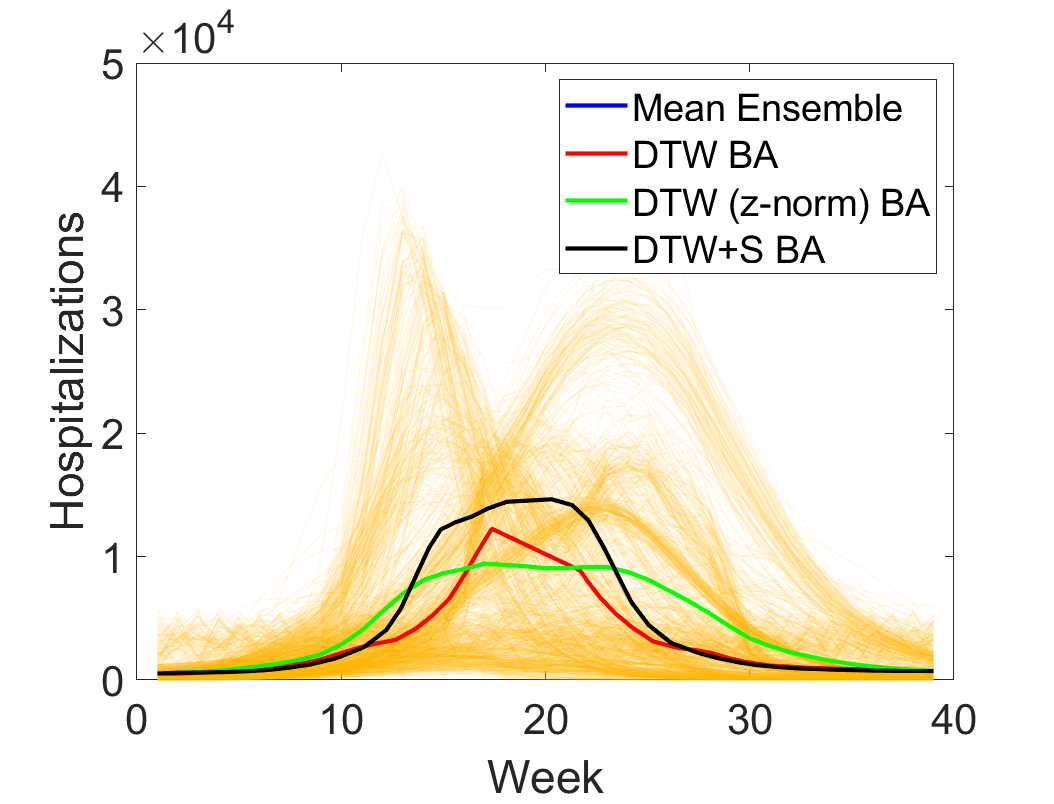}
    \label{fig:ens_set7}
    }
    \subfigure[Intermediate step]{
    \includegraphics[trim={0.2cm 0.1cm 0.8cm 0.2cm},clip,width=0.47\columnwidth]{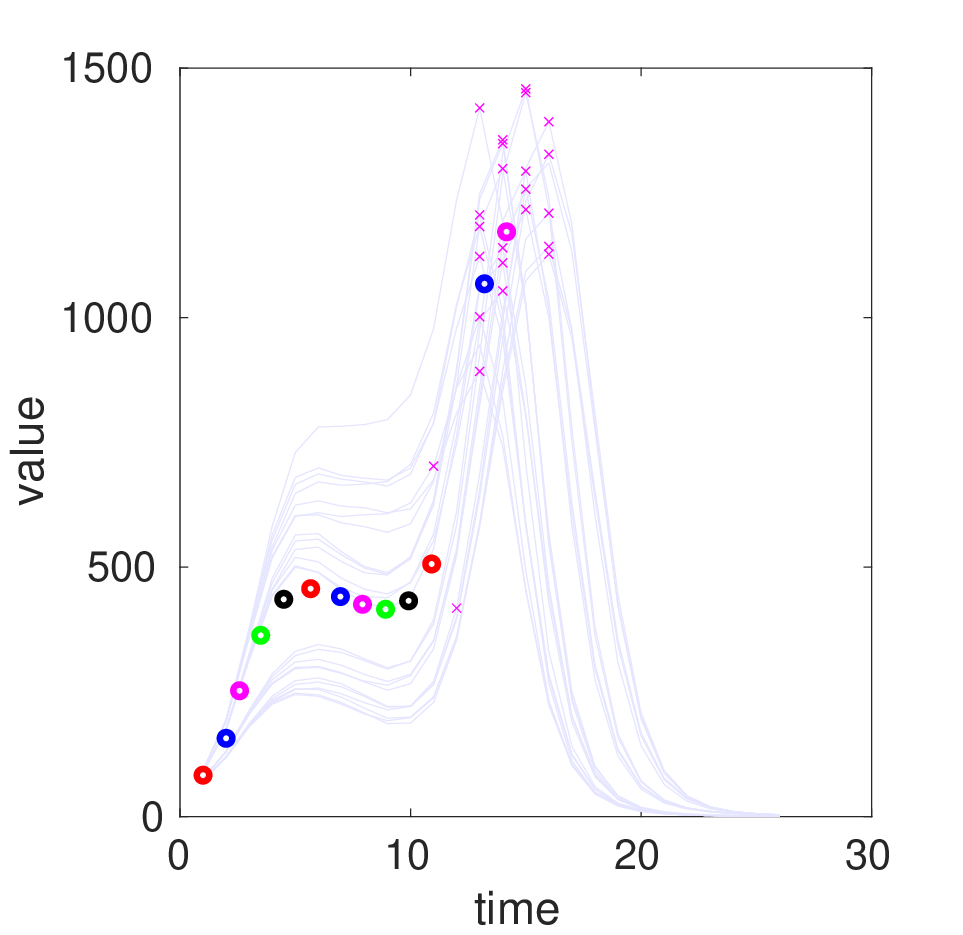} 
    \label{fig:ensemble_interm}
    }
    \caption{Ensembling results: (a) Different ensembling approaches on Set 7. The dark yellow lines represent the individual trajectories. (b) An instance of alignment on Set 1. Pink `x' on the individual time-series are aligned to get the ensemble point (pink circle). Previous circles represent the ensemble points obtained from previous alignments}
    \label{fig:ensemble-process}
\end{figure}

The resulting ensembles for Set 7 are shown in Figure~\ref{fig:ens_set7}. We observe that DTW+S BA produces the highest peak among all ensembles. Mean ensemble and DTW (z-norm) BA produce almost identical results (the green line overlaps with the blue line) and flatten the peak. To quantitatively compare the ensembles, we measure the fractional error of the ensembles in representing the peak timing and the peak size (i.e., the value and the time) at which the peak occurs. The ground truth is obtained by extracting the peak values (and timing) of all trajectories and averaging them. The results are presented in Table~\ref{tab:my_label}. Note that our approach is the only one that captures the peak timing and size of the underlying trajectories well for all sets.

\begin{figure*}[!ht]
    \centering
        \includegraphics[width=0.30\textwidth,trim=1.5cm 0.2cm 1.2cm 0.2cm,clip]{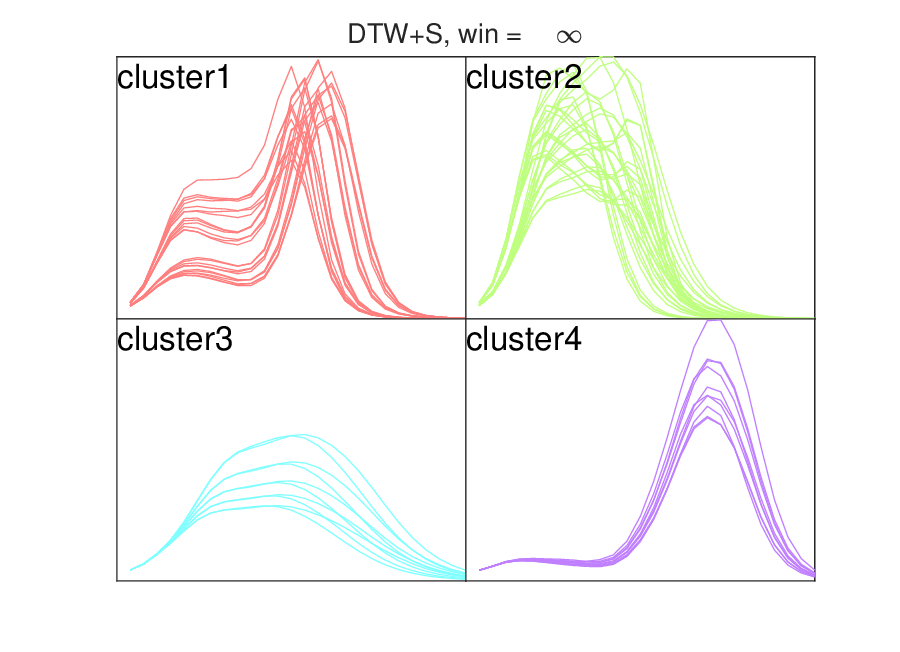}
    \hfill
        \includegraphics[width=0.30\textwidth,trim=1.5cm 0.2cm 1.2cm 0.2cm,clip]{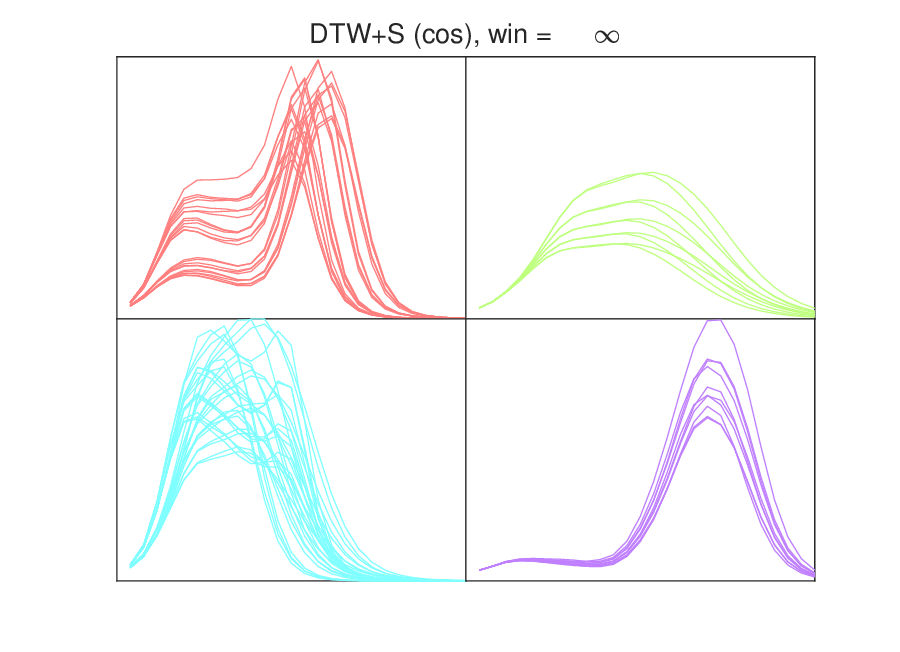}
    \hfill
        \includegraphics[width=0.30\textwidth,trim=1.5cm 0.2cm 1.2cm 0.2cm,clip]{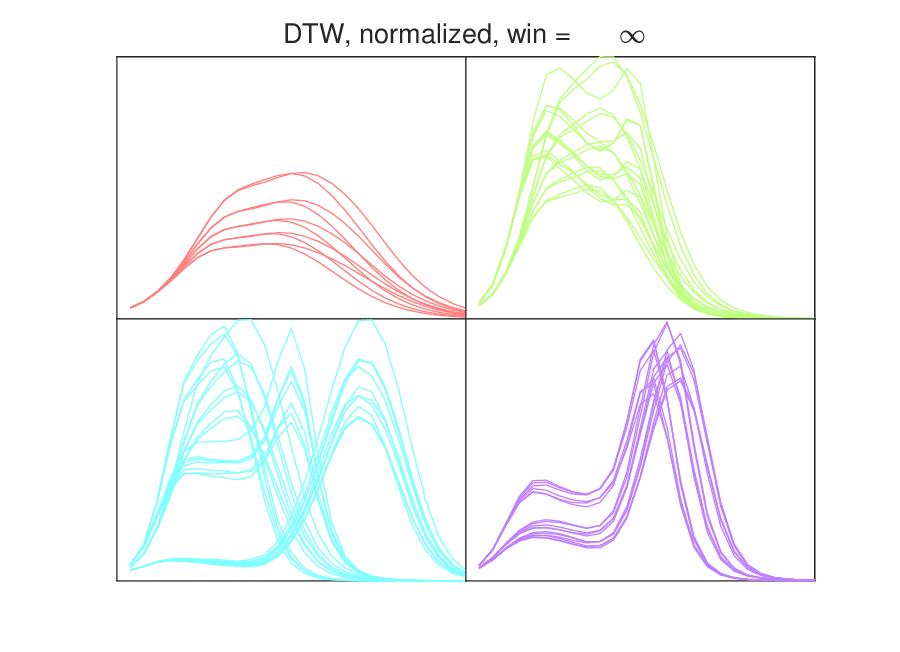}
    \hfill
        \includegraphics[width=0.30\textwidth,trim=1.5cm 0.2cm 1.2cm 0.2cm,clip]{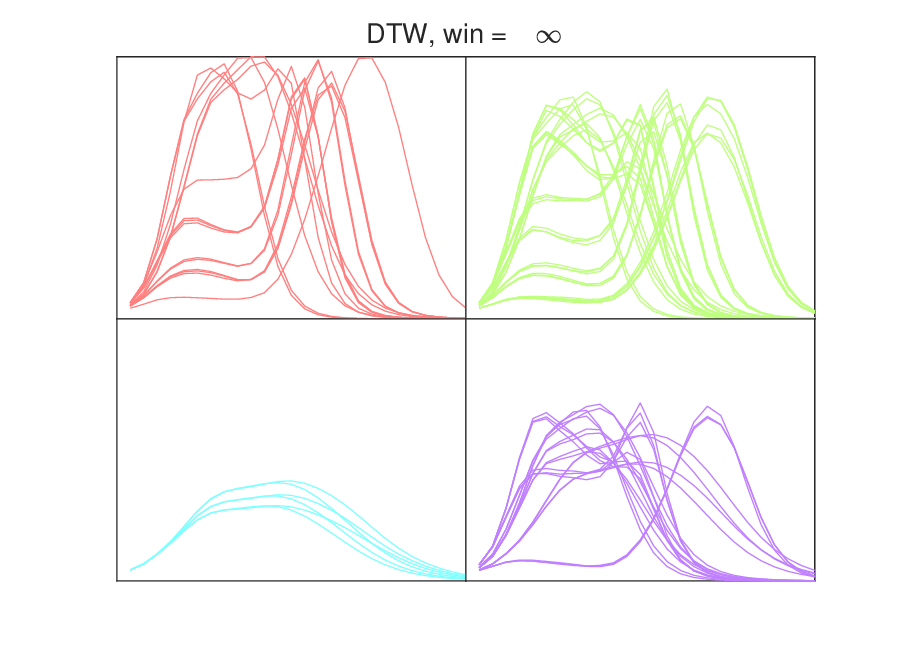}
    \hfill
      \includegraphics[width=0.30\textwidth,trim=1.5cm 0.2cm 1.2cm 0.2cm,clip]{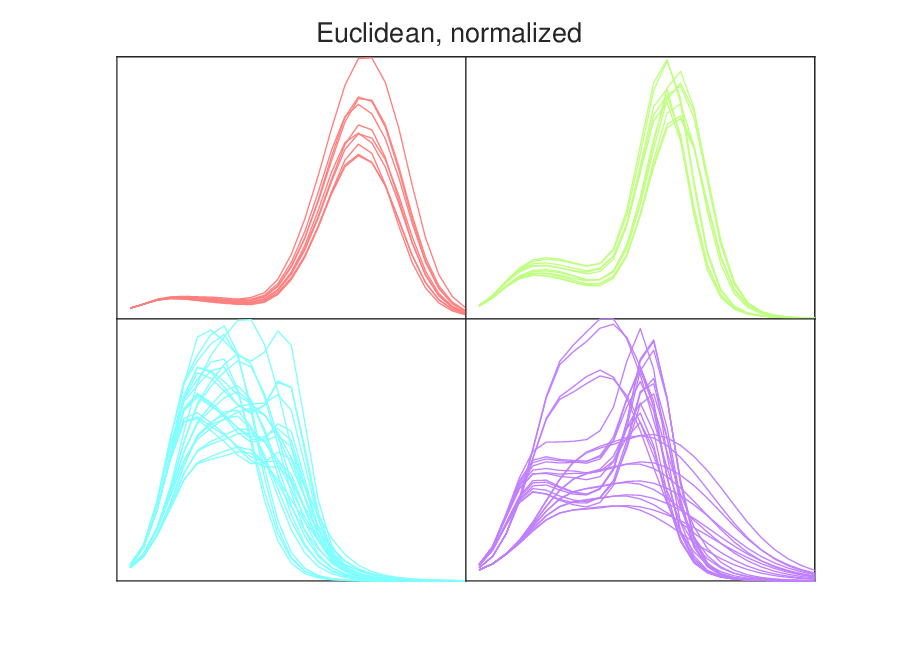}
    \hfill
        \includegraphics[width=0.30\textwidth,trim=1.5cm 0.2cm 1.2cm 0.2cm,clip]{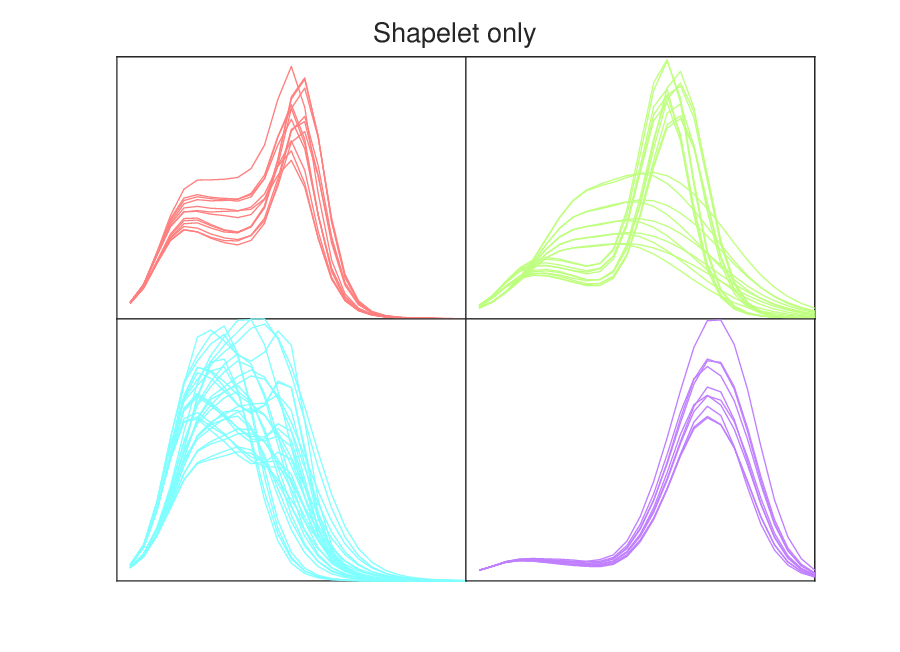}
    \caption{Comparison of clustering results obtained from DTW+S against other distance measures. DTW+S with Euclidean (default) and cosine distance produces reasonable clustering. All other measures mix different patterns into one cluster.}
    \label{fig:clustering_results}
\end{figure*}

\subsection{Clustering: A Qualitative Evaluation}
\label{sec:exp_cluster}
We consider time-series projections for weekly influenza hospitalization for a US state by a model from Influenza Scenario Modeling Hub~\cite{fluSMH}. It has 75 time-series, each corresponding to different choices of parameters and initialization.  We calculated the dissimilarity matrix (all pair distances) using the following. (1) DTW+S: our method with infinite window for alignment; (2): DTW+S (cos): same as DTW+S, except that cosine distance is used instead of Euclidean  for aligning SSRs; (3) DTW, normalized: Applying DTW after normalizing all time-series to zero mean and unit variance -- a common normalization technique used with DTW~\cite{UCRArchive}; (4) DTW: DTW without any transformation or normalization; (5) Euclidean, normalized: Euclidean distance without any time warping on standard normal time-series; and (6) Shapelet-only: Euclidean distance on the SSR without any time-warping. For DTW+S, we generate hierarchical clusters with the number of clusters selected using the Silhouette Coefficient as described in Section~\ref{sec:class_and_clust}. We use the same number of clusters for clustering using each of the above dissimilarity measures. 

Figure~\ref{fig:clustering_results} shows the results of clustering. We observe that DTW+S produces four clusters with similarly shaped time-series. DTW with standard normalization mixes clusters 1, 2, and 4. The Shapelet-only measure mixes clusters 1 and 3. DTW+S (cos) produces the same clustering (different ordering); DTW without normalization does not produce any discernable pattern in the trends and instead seems to group together those time-series that have similar peak heights; Euclidean distance with standard normalization mixes clusters 1, 2 and 4. Note that Shapelet transformation is not sufficient to capture similar time-series due to not being flexible across time. On the other hand, DTW with simple standard normalization cannot capture similar trends that occur at different scales. However, when they are combined in DTW+S, they produce reasonable clustering.

\subsection{Classification}
\label{sec:exp_classify}
For classification, we use 64 datasets available at UCR Time Series Classification 2015 Archive~\cite{UCRArchive}. Since DTW+S is based on the distance between local trends, we ignored very large time-series where trends would be difficult to infer visually. Handling long time-series may require some sampling and using larger shapelets. Further, an analysis of our approach on multiple large datasets will require more optimized implementation. These analyses and optimizations are beyond the scope of this work. Therefore, we removed all the datasets where the length of the time-series was more than 800 and the number of training data instances where more than 700, giving us 64 datasets out of 86.

\begin{figure}[!ht]
    \centering
    \includegraphics[width=0.9\columnwidth, trim={310 0 80 0}, clip]{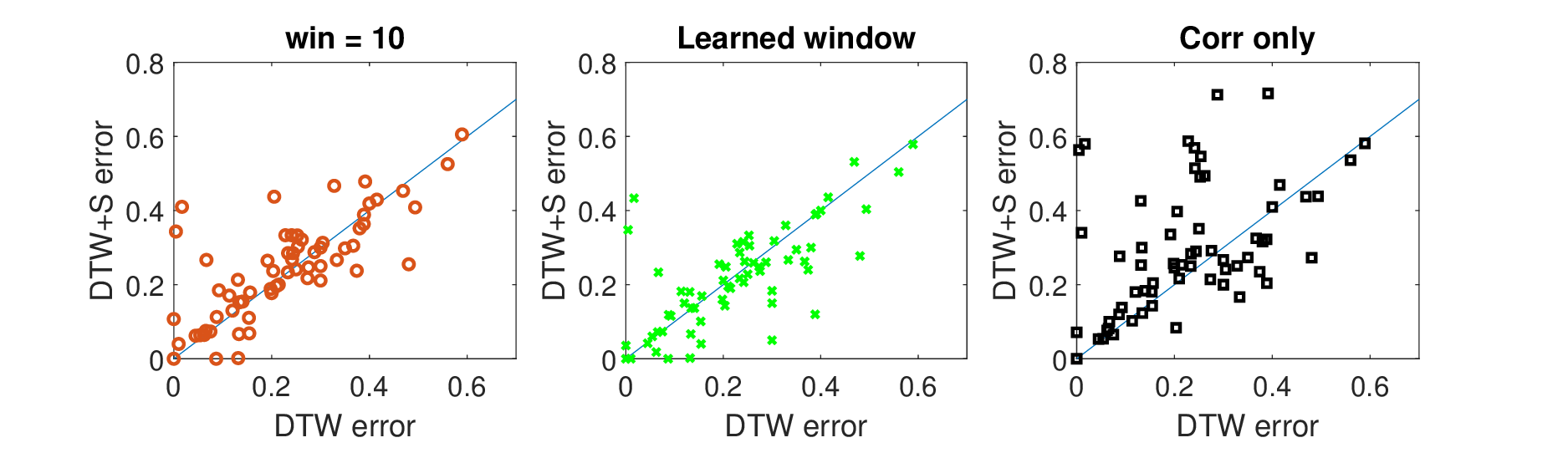}
    \caption{Performance of DTW+S on 64 datasets.}
    \label{fig:val_results}
\end{figure}

\begin{figure*}[!htb]
    \centering
    \subfigure[better (0.0017) vs DTW (0.13)]{
        \includegraphics[width=0.26\textwidth]{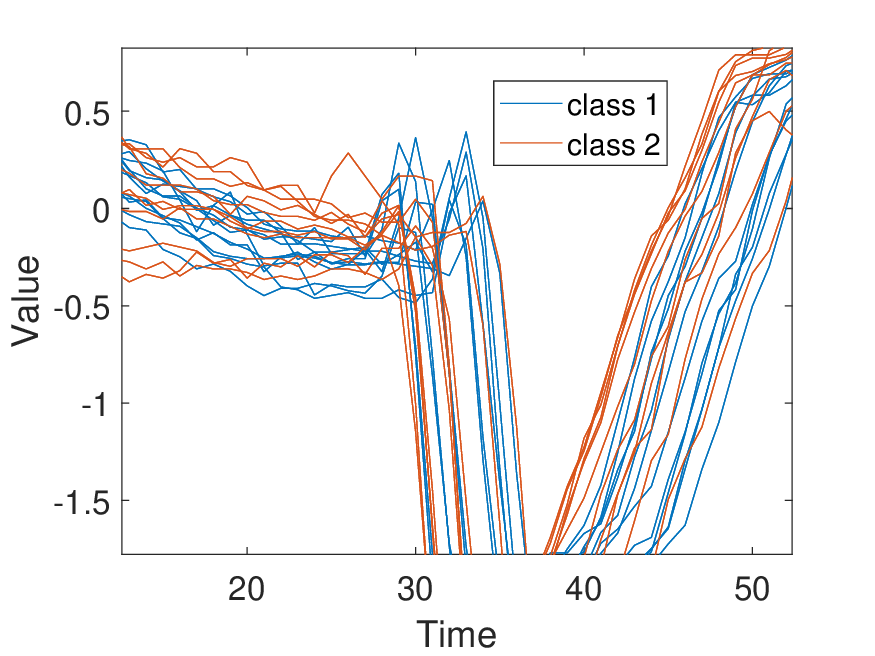}    
        \label{fig:good}
    }
    \subfigure[worse (0.23) vs DTW (0.067)]{
        \includegraphics[width=0.26\textwidth]{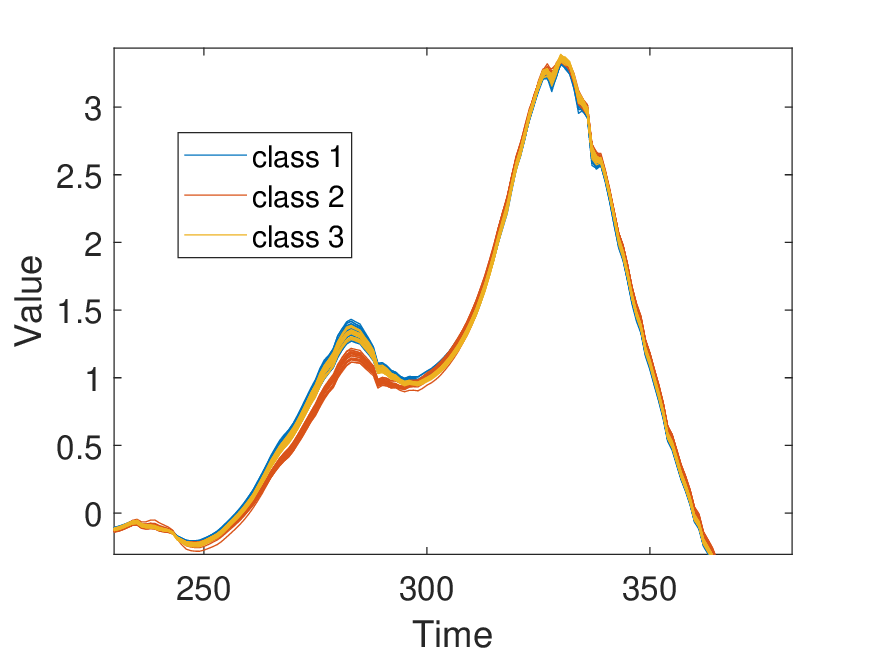}    
        \label{fig:bad}
   }
    \subfigure[worse (0.35) vs DTW (0.0044)]{
        \includegraphics[width=0.26\textwidth]{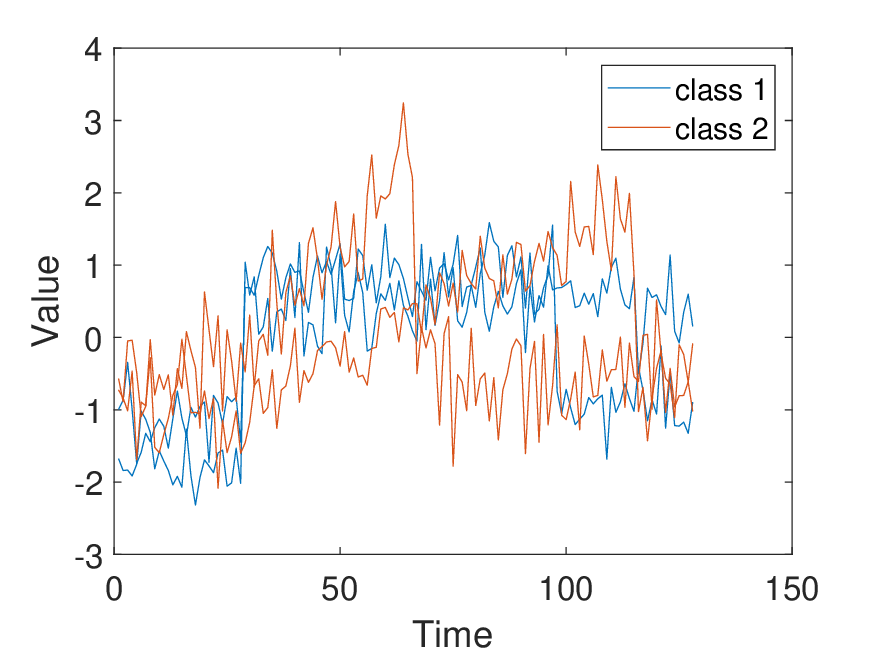}    
        \label{fig:noisy}
    }
    \caption{Section of time-series color-coded to show the different classes. (a) DTW+S picks up small local trends that are not captured by DTW alone. (b) The difference in the classes is the scale rather than the shape in certain parts of the time-series, making it not a desirable dataset for DTW+S. (c) The time-series possess noise preventing DTW+S from identifying trends.}
    \label{fig:good_bad}
\end{figure*}

\begin{figure}[!ht]
    \centering
    \includegraphics[width=0.85\columnwidth, trim={310 0 80 0}, clip]{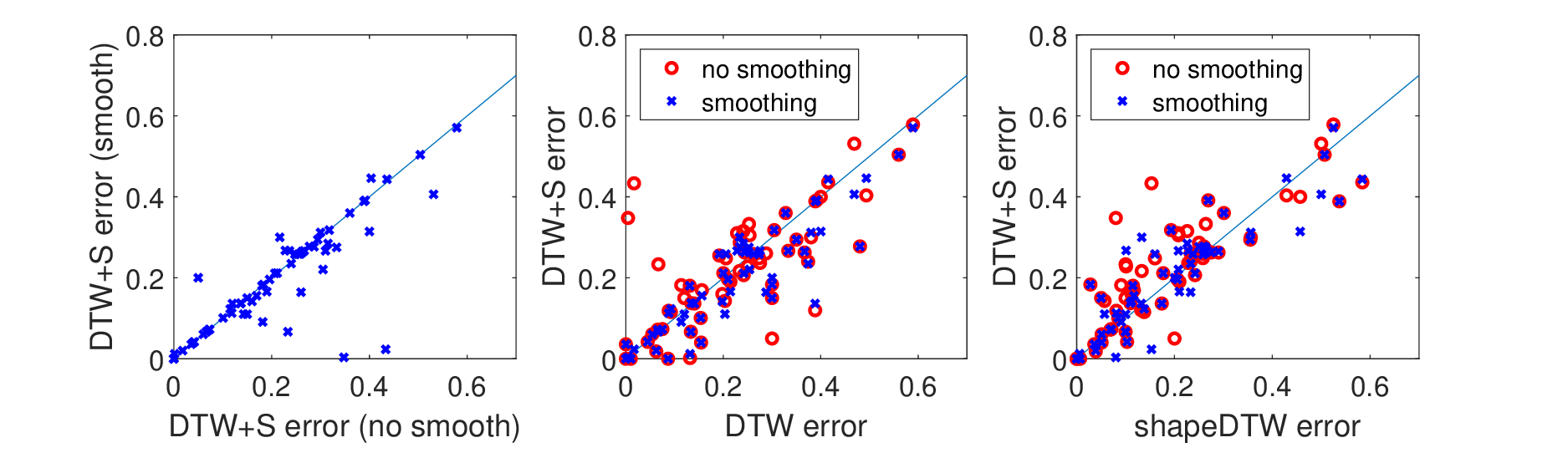}
    \caption{Results obtained by DTW+S with smoothing. (a) Improvement on many datasets where DTW+S was worse than DTW. (b) Comparison against shapeDTW.}
    \label{fig:smooth_results}
\end{figure}

For each dataset, we find the 1-nearest neighbor for each instance of test time-series in the training set using DTW+S and assign its class. Then, we evaluate our approach using error defined as the fraction of misclassification, which is the commonly used evaluation method for these datasets~\cite{petitjean2014dynamic}. We treat the warping window $\tau$ as a hyperparameter. 
We use leave-one-out cross-validation to identify $\tau$ as a fraction of the length of the time-series $T$ from a set of values -- $\tau = \{0, 0.01T, 0.02T, \dots, 0.07T\}$. We compare our results with the reported best performance of DTW~\cite{UCRArchive}. The results are presented in Figure~\ref{fig:val_results}. In the scatter plot, each point represents errors on a dataset. The blue line represents the line $y=x$, i.e., when DTW+S has the same error as that of DTW. A point lying below (and to the right) of this line indicates that DTW+S error was lower than DTW error.
We observe that 
DTW+S outperforms DTW on 57.7\% datasets.  The two measures lead to similar errors for many datasets (along the $y=x$ line). ``Corr only'' represents the DTW+S measure obtained by ignoring the ``flat'' shapelet. In this case, DTW+S reduces to the set of Person correlations with the three other shapelets. 
DTW+S (corr) only outperforms DTW in 39.1\% datasets. Additionally, it produces some large errors. Correlation ignores scale completely, and small fluctuations that are not necessarily useful patterns, but noise can cause the measure to consider it similar to some other significant patterns. Thus, the \textit{flatness dimension has a significant contribution} to the performance of DTW+S.

We note that DTW+S does not outperform DTW on all datasets. This is expected as DTW+S focuses more on the shapes rather than the scale. Figure~\ref{fig:good_bad} shows examples where DTW+S and DTW have significantly different performances. For the dataset in Figure~\ref{fig:good}, DTW+S has an error of 0.0017 while DTW has an error of 0.13. This is because DTW+S picks up small local trends (a spike around time 30) present in one class and absent in another class. This is not captured by DTW alone. For the dataset in Figure~\ref{fig:bad}, DTW+S has an error of 0.23 while DTW has a much lower error of 0.07. For this dataset, the difference in the classes is the scale rather than the shape in certain parts of the time-series. DTW is able to identify this distinction, making it a less desirable dataset for DTW+S. 

\subsubsection{Smoothing}

Another type of dataset that would be undesirable for DTW+S is that where the time-series have high noise (Figure~\ref{fig:noisy}). This noise will impact the identification of $\beta$ for the flatness parameter and the identification of local trends. One way to address this is to smooth the time-series before finding the Shapelet-space Transformation. While a domain expert may choose a reasonable method for smoothing, we use a moving average method with the window size chosen with leave-one-out cross-validation from the set $\{0, 0.1T, 0.2T, 0.4T\}$. The results are shown in Figure~\ref{fig:smooth_results}. The first plot shows that allowing smoothness generally improves the error (many datasets fall to the bottom right). For a very small number of datasets validation results seem to pick a smoothing window that results in worse performance on the test set. In practice, this could be mitigated by having a larger training set. The second plot of Figure~\ref{fig:smooth_results} shows that the smoothing significantly brings down the error for some datasets (e.g., the three circles on the left of the plot drop close to zero). As an example, the dataset in Figure~\ref{fig:noisy} for which DTW+S was significantly worse (error of 0.35) compared to DTW (0.0044), smoothing results in an error dropping to 0.0022. Finally, the third plot in Figure~\ref{fig:smooth_results} compares our approach against shapeDTW~\cite{zhao2018shapedtw}. We select ``HOG1D'' version of shapeDTW as it performs the best for this collection of datasets. Without smoothing, there are a few datasets where DTW+S is much worse than shapeDTW (higher in the plot). After smoothing, most datasets accumulate around $y=x$ line. 

\section{Discussion}\label{sec:explain}
\paragraph{Interpretability} 
A key advantage of our approach is interpretability. Since the SSR is determined by the similarity of the given trend with respect to pre-defined shapes of interest, one can easily make sense of the representation. This is particularly useful for application domains where there exist certain shapes of interest (e.g., increase and peak in public health) and there is resistance towards adopting black-box approaches. Figure~\ref{fig:explanation} shows the SSR of two samples each from two classes of a dataset. This is the dataset corresponding to Figurer~\ref{fig:good}, where we observed small peaks appearing in one of the classes. Based on the SSR of the samples, we observe that samples for class 1 show a bright yellow (high value) corresponding to dimension 2. On the other hand, samples from class 2 have a dull yellow (lower value) for the same dimension. This dimension corresponds to the shapelet ``peak'' $=[1, 2, 2, 1]$, thus suggesting that a peak around the $t=25$ makes sample 1 in class 1 more similar to sample 2 in class 1 compared to the samples in another class.
\begin{figure}[!ht]
     \centering
     \includegraphics[width=0.92\columnwidth, trim={0 5 0 0}, clip]{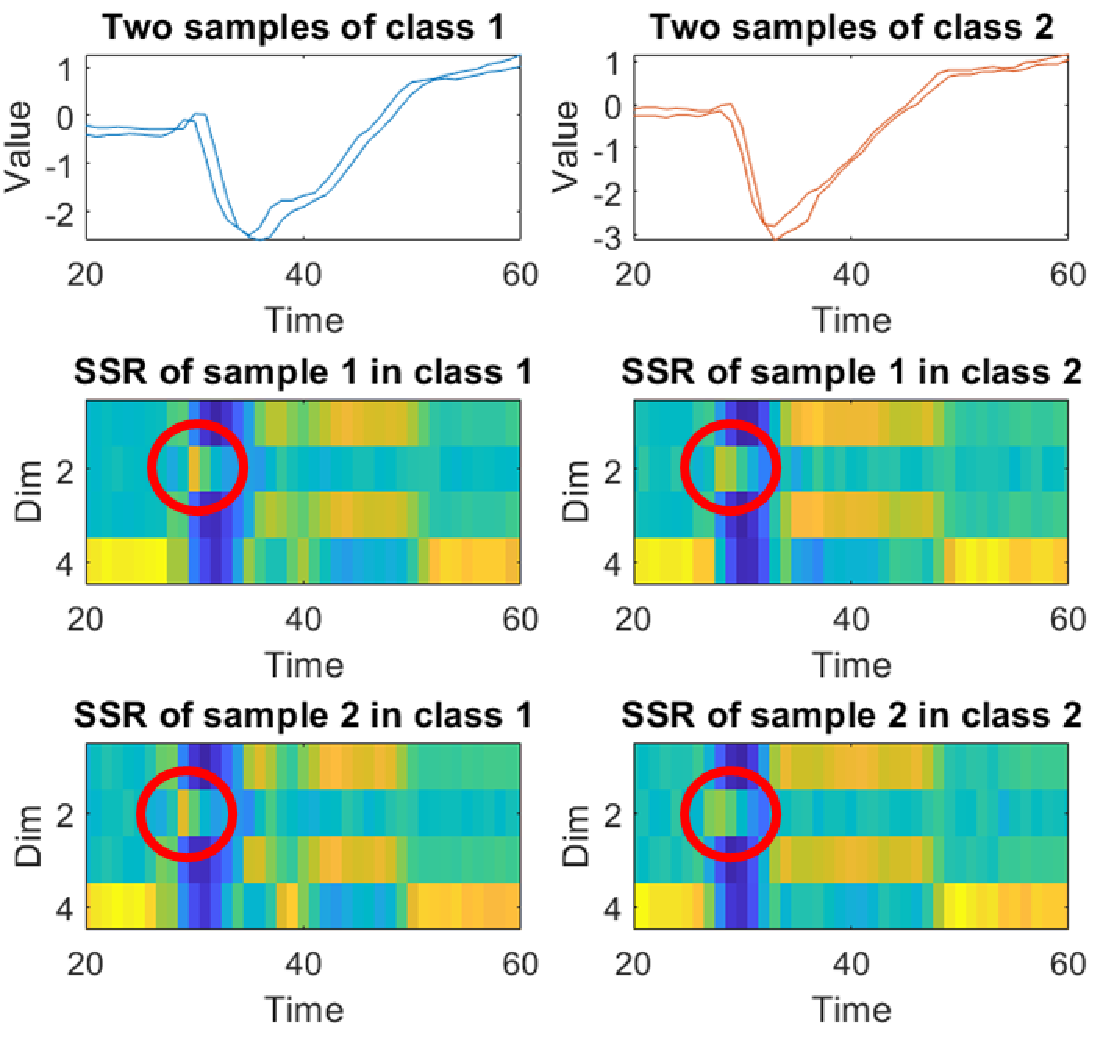}
     \caption{Interpreting the Shapelet Space Representation.}
     \label{fig:explanation}
     \vspace{-0.3cm}
 \end{figure}

 \paragraph{Limitations}
Our approach is \textit{not designed for general-purpose classification} or encoding, particularly, where shapes have little impact compared to the scale. Furthermore, while warping windows and smoothing parameters can be set through validation, the best utilization of DTW+S would require some domain knowledge to understand their appropriate setting and the choice of shapelets. However, Theorems~\ref{thm:sufficient} and~\ref{thm:necessary} act as guidelines to ensure that the chosen shapelets satisfy the desired property of closeness preservation. Another limitation is the implementation -- currently, we use $\mathcal{O}(T^2)$ time and space for DTW on the distance matrix obtained from SSR. In future work, we will explore existing optimizations of DTW~\cite{keogh2000scaling}.

\section{Conclusion}
We have proposed a novel interpretable distance measure for time-series that looks for a sequence of similar trends occurring around the same time. 
It can capture local trends in a representation that is closeness preserving. We have demonstrated that our approach DTW+S, which applies DTW on our SSR matrices, results in better clustering, which cannot be achieved by DTW or SSR alone. We have developed an ensemble method using DTW+S that captures both the aggregate scale and timing of the individual time-series significantly better than the currently used mean ensemble. We have shown that DTW+S can result in better classification compared to other measures for a large number of datasets, particularly those where local trends play a key role.

\section*{Acknowledgements}
This work was supported by the Centers for Disease Control and Prevention and the National Science Foundation under the awards no. 2135784, 2223933, and 2333494. Any opinions, findings, and conclusions or recommendations expressed in this material are those of the author and do not necessarily reflect the views of the National Science Foundation and the Center for Disease Control and Prevention.
We would like to thank the US and European Scenario Modeling Hub for useful discussions.

\bibliography{sample,epidemic}

\clearpage

\appendix
\section*{Supplementary Material}
\section{Proofs}
\begin{theorem}
Property~\ref{prop:2} is satisfied with any set of $w-1$ linearly independent shapelets and the ``flat'' shapelet, i.e., with this choice $\|f(\mathbf{x}) - f(\mathbf{y})\| \leq \epsilon$ iff (i) both $\mathbf{x}$ and $\mathbf{y}$ are ``almost'' flat, or (ii) $\|\mathbf{x'} - \mathbf{y'}\| \leq \delta$, for some small $\epsilon$ and $\delta$.
\end{theorem}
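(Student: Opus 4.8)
The plan is to exploit the explicit block structure of the shapelet-space map $f$. Write $\phi_{\mathbf x},\phi_{\mathbf y}$ for the flatness values of $\mathbf x,\mathbf y$ and $c_i(\mathbf x)=corr(\mathbf x,\mathbf s_i)$, so that $f(\mathbf x)=\big(2\phi_{\mathbf x}-1,\,(1-\phi_{\mathbf x})c_1(\mathbf x),\dots,(1-\phi_{\mathbf x})c_{w-1}(\mathbf x)\big)$ and hence
\[
\|f(\mathbf x)-f(\mathbf y)\|^2 = 4\,(\phi_{\mathbf x}-\phi_{\mathbf y})^2 + \big\|(1-\phi_{\mathbf x})\,L\mathbf x' - (1-\phi_{\mathbf y})\,L\mathbf y'\big\|^2,
\]
where $H=\{v\in\mathbb R^w:\sum_j v_j=0\}$ is the mean-zero hyperplane, $\mathbf x',\mathbf y'\in H$ are the mean-centered, normalized versions of $\mathbf x,\mathbf y$ (so $\|\mathbf x'\|\le 1$ and $c_i(\mathbf x)=\langle\mathbf x',\mathbf s_i'\rangle$), and $L\colon H\to\mathbb R^{w-1}$ is the linear map $v\mapsto(\langle v,\mathbf s_i'\rangle)_{i=1}^{w-1}$ with $\mathbf s_i'$ the centered, normalized shapelets. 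The first — and conceptually central — step is the linear-algebra observation that gives the statement its ``$w-1$'': since $\dim H=w-1$ and the $w-1$ shapelets are linearly independent (with the ``flat'' shapelet absorbing the $\mathbf 1$-direction), the vectors $\{\mathbf s_i'\}$ form a basis of $H$, so $L$ is invertible on $H$ and bi-Lipschitz, $\sigma_{\min}\|v\|\le\|Lv\|\le\sigma_{\max}\|v\|$ for $v\in H$, with $\sigma_{\min}=\sigma_{\min}(L)>0$. (This is exactly where Theorem~\ref{thm:necessary} will bite: fewer shapelets leave $L$ with a nontrivial kernel, letting two genuinely different shapes collapse to the same correlation vector.)

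For the ($\Leftarrow$) direction I would split into cases. If both vectors are almost flat, i.e.\ $1-\phi_{\mathbf x},1-\phi_{\mathbf y}\le\eta$, then $4(\phi_{\mathbf x}-\phi_{\mathbf y})^2\le 4\eta^2$ and, using $\|\mathbf x'\|,\|\mathbf y'\|\le 1$ and $\|L\|\le\sigma_{\max}$, the second term is at most $(2\eta\sigma_{\max})^2$; taking $\eta$ small relative to the target $\epsilon$ closes this case. Otherwise, when $\|\mathbf x'-\mathbf y'\|\le\delta$ and the flatness values are comparable (which is automatic once neither vector is almost flat, since then $\phi_{\mathbf x},\phi_{\mathbf y}\approx 0$), I would use the identity $(1-\phi_{\mathbf x})L\mathbf x'-(1-\phi_{\mathbf y})L\mathbf y' = (1-\phi_{\mathbf x})L(\mathbf x'-\mathbf y') + (\phi_{\mathbf y}-\phi_{\mathbf x})L\mathbf y'$ and bound its norm by $\sigma_{\max}\delta + |\phi_{\mathbf x}-\phi_{\mathbf y}|\,\sigma_{\max}$, which is small; combined with $4(\phi_{\mathbf x}-\phi_{\mathbf y})^2$ small this gives $\|f(\mathbf x)-f(\mathbf y)\|\le\epsilon$ for a suitable $\delta=\delta(\epsilon)$.

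For ($\Rightarrow$) I would run the same estimates in reverse. From $\|f(\mathbf x)-f(\mathbf y)\|\le\epsilon$ we immediately get $|\phi_{\mathbf x}-\phi_{\mathbf y}|\le\epsilon/2$ and $\|(1-\phi_{\mathbf x})L\mathbf x'-(1-\phi_{\mathbf y})L\mathbf y'\|\le\epsilon$. Fix a threshold $\eta$: if $\min(1-\phi_{\mathbf x},1-\phi_{\mathbf y})\le\eta$, the first bound forces $\max(1-\phi_{\mathbf x},1-\phi_{\mathbf y})\le\eta+\epsilon/2$, so both are almost flat — case (i). Otherwise $1-\phi_{\mathbf x},1-\phi_{\mathbf y}\ge\eta$, and rearranging the same identity gives $(1-\phi_{\mathbf x})\|L(\mathbf x'-\mathbf y')\|\le\epsilon + |\phi_{\mathbf x}-\phi_{\mathbf y}|\sigma_{\max}\le\epsilon(1+\tfrac12\sigma_{\max})$, hence $\|\mathbf x'-\mathbf y'\|\le\|L(\mathbf x'-\mathbf y')\|/\sigma_{\min}\le \epsilon(1+\tfrac12\sigma_{\max})/(\eta\sigma_{\min})=:\delta$ — case (ii).

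The hard part will not be any single estimate but the bookkeeping around flatness. Because $m\mapsto\phi=\exp(-\beta(m-m_0))$ is nonlinear and only pins $\phi\approx 0$ once $m$ is well above $m_0$, the thresholds for ``almost flat'' must be chosen once and used consistently in both directions, and one must be careful that condition (ii) is read as carrying the implicit ``the flatness values of $\mathbf x$ and $\mathbf y$ are comparable'' that corresponds to Property~\ref{prop:2}(i) requiring neither vector to be almost flat — otherwise the flat coordinate $2\phi-1$ records a genuine difference between two same-shape vectors at very different scales and the ($\Leftarrow$) inclusion can fail. Pinning down those quantifiers, and checking that $m_0$ and $\beta$ enter only through the constants $\epsilon,\delta,\sigma_{\min},\sigma_{\max}$ and not the structure of the argument, is where the work goes; the rest is the two triangle-inequality computations above.
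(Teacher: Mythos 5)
Your proposal is correct and takes essentially the same route as the paper: the same decomposition into the flat coordinate plus the correlation block, the same case split (both vectors almost flat versus neither), and the same key linear-algebra fact—your invertible map $L$ on the mean-zero hyperplane with constant $\sigma_{\min}(L)^{-1}$ is exactly the paper's augmented matrix $C_w$ (centered shapelets plus the row $\mathbf{1}$) with constant $\|C_w^{-1}\|$, applied to the per-shapelet bounds $|\mathbf{s}_i'^T(\mathbf{x'}-\mathbf{y'})|\leq(\epsilon_i+\epsilon_0/2)/(1-\phi_x)$. The only substantive difference is that you also sketch the converse inclusion for case (ii) and correctly flag that it needs comparable flatness values; the paper's proof simply does not treat that direction, so your added care there is a point in your favor rather than a deviation.
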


\begin{proof}
Suppose $\|f(\mathbf{x}) - f(\mathbf{y})\| \leq \epsilon$. 

If both vectors are approximately flat then, without loss of generality, we can assume that $\phi_x \geq \phi_y \geq 1-\varepsilon$, for some small $\varepsilon$. Then, along the flat dimension, $|2(\phi_x -1) - 2(\phi_y-1)|^2 \leq 4(1 - (1 - \varepsilon))^2 \leq 4\varepsilon^2$. And, along any other dimension,
\begin{align*}
    |(1-\phi_x)\mathbf{s'}^T\mathbf{x'} &- (1-\phi_y)\mathbf{s'}^T\mathbf{y'}|^2 \\
    &\leq |(1-\phi_y)\cdot 1 - (1-\phi_x)\cdot (-1)|^2 \\
    &= |2 - (\phi_x + \phi_y)|^2 \leq |2 - 2(1 - \varepsilon)|^2 \leq 4 \varepsilon^2.
\end{align*}
So, $\|f(\mathbf{x}) - f(\mathbf{y})\|^2 \leq 4(w-1)\varepsilon^2 + 4\varepsilon = 4w\varepsilon^2 = \epsilon^2$, where $\varepsilon = \epsilon/(2\sqrt{w})$. 

Now, suppose that both vectors are not ``almost'' flat and $\phi_x \geq \phi_y$. Let $\epsilon^2 = \sum_i \epsilon^2_i$, where $\epsilon_i$ is the difference in the $i^{th}$ dimension of $f(\mathbf{x}) -f(\mathbf{y})$.
Then, along the dimension corresponding to $\mathbf{s}_0$:
\begin{equation}
    |(2\phi_x - 1) - (2\phi_y - 1)| \leq \epsilon_0
    \implies \phi_\mathbf{x} \leq \phi_\mathbf{y} + \epsilon_0/2.
\end{equation}
Now, since  $\phi_y \leq \phi_x \leq \phi_y + \epsilon_0/2$, then a small $\phi_y$ would imply that $\phi_x$ is also small. Therefore, both $\phi_x$ and $\phi_y$ are not small.

Now, along any other dimension $i$,
\begin{align*}
\epsilon_i &= |(1-\phi_x)\mathbf{s'}^T\mathbf{x'} - (1-\phi_y)\mathbf{s'}^T\mathbf{y'} | \\
&= |(1-\phi_x)\mathbf{s'}^T(\mathbf{x'} - \mathbf{y'}) + (1-\phi_x)\mathbf{s'}^T\mathbf{y'} - (1-\phi_y)\mathbf{s'}^T\mathbf{y'} | \\
&= |(1-\phi_x)\mathbf{s'}^T(\mathbf{x'} - \mathbf{y'}) + (\phi_y -\phi_x)\mathbf{s'}^T\mathbf{y'}| \\
&\geq |(1-\phi_x)\mathbf{s'}^T(\mathbf{x'} - \mathbf{y'})| -  \epsilon_0/2\,.
\end{align*}
And thus,
\begin{align}
    |(1-\phi_x)\mathbf{s'}^T(\mathbf{x'} - \mathbf{y'})| &\leq \epsilon_i + \epsilon_0/2 \nonumber\\
    \implies 
    |\mathbf{s'}^T(\mathbf{x'} - \mathbf{y'})| &\leq \frac{\epsilon_i + \epsilon_0/2}{1 - \phi_x}\,,\forall i\in\{1 \dots w-1\}\,.
\end{align}

Additionally, we note that due to the normalization, 

\begin{equation}
    \mathbf{1}^T\mathbf{y'} = \mathbf{1}^T\mathbf{x'} = 0 \implies \mathbf{1}^T(\mathbf{y'} - \mathbf{x'}) = 0\,,
\end{equation}
where $\mathbf{1}$ is the vector of all ones.
Now consider a matrix $C_w$ with $w$ rows whose $i^{th}$ row is given by $\mathbf{s'}_i$, for $i \leq w-1$ and $w^{th}$ row is $\mathbf{1}$. 
\begin{equation}\label{eqn:C_w}
    C_w (\mathbf{x'} - \mathbf{y'}) = \mathbf{e}\,,
\end{equation}
where $i^{th}$ row of $\mathbf{e}$ for $i<w$ is given by some $|e_i| \leq \frac{\epsilon_i + \epsilon_0/2}{1 - \phi_x}$ and the $w^{th}$ entry is 0.
We show that $\mathbf{1}$ is linearly independent of all the $w-1$ other rows. Recall that $\mathbf{s'}_i$ are all $\mu$-normalized, and so $\mathbf{1}^T\mathbf{s'}_i = 0, \forall i$. Therefore, all $w$ rows of $C_w$ are linearly independent, i.e., $C_w$ is full rank and invertible. So, we have
\begin{align}\label{eqn:diff_bound}
    \mathbf{x'} - \mathbf{y'} &= C_w^{-1}\mathbf{e} \implies
    \|\mathbf{x'} - \mathbf{y'}\| = \|C_w^{-1}\mathbf{e}\| \nonumber \\
    &\leq \|C_w^{-1}\| \|\mathbf{e}\| \leq \|C_w^{-1}\|\sqrt{\sum_{i=1}^{w-1} \left(\frac{\epsilon_i + \epsilon_0/2}{1 - \phi_x}\right)^2} =  \delta\,,
\end{align}
which is small for finite $\| C_w^{-1}\|$ and small $\epsilon_i, \forall i \geq 0$.

\end{proof}

\begin{theorem}
    At least $w-1$ linearly independent shapelets are necessary along with the ``flat'' shapelet to satisfy property~\ref{prop:2}.
\end{theorem}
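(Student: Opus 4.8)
The plan is to prove the contrapositive: if the shapelet set consists of the flat shapelet $\mathbf{s}_0$ together with non-flat shapelets $\mathbf{s}_1,\dots,\mathbf{s}_{d-1}$ whose $\mu$-normalized versions $\mathbf{s'}_1,\dots,\mathbf{s'}_{d-1}$ do \emph{not} span the hyperplane $H=\mathbf{1}^{\perp}\subset\mathbb{R}^w$ (in particular whenever $d-1<w-1$), then Property~\ref{prop:2} fails. Concretely I would exhibit two vectors $\mathbf{x},\mathbf{y}\in\mathbb{R}^w$, neither ``almost flat'', with normalized versions a \emph{fixed} positive distance apart (hence not related by scaling and translation), yet with identical shapelet-space representations $f(\mathbf{x})=f(\mathbf{y})$. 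By Theorem~\ref{thm:sufficient}, Property~\ref{prop:2} is equivalent to ``$\|f(\mathbf{x})-f(\mathbf{y})\|\le\epsilon$ iff both are almost flat or $\|\mathbf{x'}-\mathbf{y'}\|\le\delta$'', so such a pair directly falsifies its ``only if'' direction. This is the natural direction to attack: too few shapelets leave $f$ without enough resolution to separate distinct shapes.

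First I would record what $f$ actually sees. For a non-flat $\mathbf{x}$, $f(\mathbf{x})=\bigl(2\phi_\mathbf{x}-1,\;(1-\phi_\mathbf{x})\,\mathbf{s'}_1^{\top}\mathbf{x'},\;\dots,\;(1-\phi_\mathbf{x})\,\mathbf{s'}_{d-1}^{\top}\mathbf{x'}\bigr)$, so once the scalar $\phi_\mathbf{x}$ is fixed, $f(\mathbf{x})$ depends on $\mathbf{x}$ only through the linear map $L:\mathbf{x'}\mapsto(\mathbf{s'}_1^{\top}\mathbf{x'},\dots,\mathbf{s'}_{d-1}^{\top}\mathbf{x'})$, and every normalized $\mathbf{x'}$ lies in $H$, with $\dim H=w-1$. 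Next, a dimension count: if $\mathrm{span}\{\mathbf{s'}_i\}\subsetneq H$ then $L$ restricted to $H$ has a nonzero kernel, so I can pick a unit vector $\hat{\mathbf{z}}\in H$ with $L(\hat{\mathbf{z}})=0$ and, for $w\ge 3$, a unit vector $\mathbf{w}_0\in H$ with $\mathbf{w}_0\perp\hat{\mathbf{z}}$, and set $\mathbf{x'}=\tfrac1{\sqrt2}(\mathbf{w}_0+\hat{\mathbf{z}})$ and $\mathbf{y'}=\tfrac1{\sqrt2}(\mathbf{w}_0-\hat{\mathbf{z}})$. These are distinct unit vectors of $H$ with $\mathbf{x'}-\mathbf{y'}=\sqrt2\,\hat{\mathbf{z}}\in\ker L$, so $L(\mathbf{x'})=L(\mathbf{y'})$, while $\|\mathbf{x'}-\mathbf{y'}\|=\sqrt2$ and $\mathbf{x'}\neq\pm\mathbf{y'}$. (The case $w=2$, where $H$ is a line, is immediate: with zero non-flat shapelets take $\mathbf{y'}=-\mathbf{x'}$.)

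Then I would lift these shape directions to honest vectors with the \emph{same} flatness. Put $\mathbf{x}=c\,\mathbf{x'}$ and $\mathbf{y}=c'\,\mathbf{y'}$ with $c,c'>0$. The average absolute slope scales linearly, $m_{c\mathbf{x'}}=c\,m_{\mathbf{x'}}$, and $m_{\mathbf{x'}}>0$ because a unit vector of $H$ is never constant; hence $c,c'$ can be chosen so that $m_\mathbf{x}=m_\mathbf{y}=:M$ for any target $M$. Taking $M>m_0$ large makes $\phi_\mathbf{x}=\phi_\mathbf{y}=e^{-\beta(M-m_0)}$ small, so neither vector is almost flat; and since $\phi_\mathbf{x}=\phi_\mathbf{y}$ and $L(\mathbf{x'})=L(\mathbf{y'})$, the first step gives $f(\mathbf{x})=f(\mathbf{y})$, so $\|f(\mathbf{x})-f(\mathbf{y})\|=0\le\epsilon$. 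Property~\ref{prop:2} (via Theorem~\ref{thm:sufficient}) would then force both vectors to be almost flat or $\|\mathbf{x'}-\mathbf{y'}\|\le\delta$, and neither holds since $\|\mathbf{x'}-\mathbf{y'}\|=\sqrt2$ is a fixed constant --- contradiction. Finally, because $\mathbf{s'}_i\in\mathrm{span}\{\mathbf{s}_i,\mathbf{1}\}$, the $\mathbf{s'}_i$ span $H$ exactly when $\{\mathbf{s}_0,\mathbf{s}_1,\dots,\mathbf{s}_{d-1}\}$ contains $w$ linearly independent vectors, i.e.\ ``$w-1$ linearly independent shapelets together with the flat one'', which is the form in which necessity is claimed; in particular $d-1\ge w-1$ is forced.

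I expect the main obstacle to be the lifting step: one must simultaneously realize the two prescribed shape directions by genuine vectors, force their flatness values $\phi$ to coincide (so the $(1-\phi)$ prefactors cancel in $f$), and keep both safely away from ``flat'' --- all three being controlled by the single positive scaling of each vector, which works only because the slope scales linearly and a normalized vector in $H$ has strictly positive average absolute slope. A lesser, bookkeeping issue is matching the informal phrase ``$w-1$ linearly independent shapelets'' to the span-of-normalized-shapelets condition that Theorem~\ref{thm:sufficient} already relies on (the $\mathbf{s'}_i\in\mathrm{span}\{\mathbf{s}_i,\mathbf{1}\}$ remark), and stating up front that a mere negation/reflection does not count as ``scaling and translation'', so that $\|\mathbf{x'}-\mathbf{y'}\|=\sqrt2$ really certifies the pair as dissimilar --- which is exactly why the construction takes $\mathbf{x'}\neq\pm\mathbf{y'}$.
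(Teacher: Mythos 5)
Your proposal is correct and takes essentially the same route as the paper: a dimension-count (rank--nullity) argument showing that with fewer than $w-1$ linearly independent shapelets the map $\mathbf{x'}\mapsto(\mathbf{s'}_i^{\top}\mathbf{x'})_i$ has a nontrivial kernel inside $\mathbf{1}^{\perp}$, which produces two non-flat vectors with distinct normalized shapes but identical shapelet-space representations; your symmetric pair $\tfrac{1}{\sqrt{2}}(\mathbf{w}_0\pm\hat{\mathbf{z}})$ is an explicit instance of the paper's reflection $\mathbf{y'}=\mathbf{x'}-2(\mathbf{u}^{\top}\mathbf{x'})\mathbf{u}$ along the null direction. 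The only substantive difference is that you make the lifting step explicit (rescaling each normalized shape so that $\phi_{\mathbf{x}}=\phi_{\mathbf{y}}$ and both stay far from flat), a detail the paper leaves implicit rather than a genuinely different argument.
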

\begin{proof}
We will show that choosing $w-2$ independent shapelets results in at least one $\mathbf{y} \neq \mathbf{x}$, such that $f(\mathbf{x}) = f(\mathbf{y})$ and none of them are flat, i.e., $\phi_x, \phi_y < 1$. Since the vectors are equal across all dimensions, $2\phi_x - 1 = 2\phi_y - 1 \implies \phi_x = \phi_y$. Therefore, for all other dimensions:
\begin{equation}
    (1-\phi_x) \mathbf{s_i'}^T\mathbf{x'} = (1-\phi_y) \mathbf{s_i'}^T\mathbf{y'} \implies \mathbf{s'_i}^T(\mathbf{y'} - \mathbf{x'}) = 0\,.
\end{equation}

Consider the matrix $C_{w-1}$ whose $i^{th}$ row for $i \leq w-2$ is $\mathbf{s'}_i$ and $(w-1)^{th}$ row is $\mathbf{1}$. Since all of its rows are independent, its rank is $w-1$. Using rank-nullity theorem~\cite{divason2013rank}, its nullity is 1.
Therefore, $\exists$ a vector $\mathbf{u} \in \mathbb{R}^w$, with $\|\mathbf{u}\| =  1$ such that,
\begin{equation}\label{eqn:null_rep}
    \mathbf{y'} - \mathbf{x'} = p \mathbf{u}
    \implies \mathbf{y'} = \mathbf{x'} - p \mathbf{u}
\end{equation}
We will prove that there exists a solution to the above other than the trivial solution $p=0$.
First, taking the square of the norm of both sides of Equation~\ref{eqn:null_rep} 
\begin{align}
\|\mathbf{y'}\|^2 &= \|\mathbf{x'} + p \mathbf{u}\|^2 
\implies 1 = \|\mathbf{x'}\|^2 + p^2 \|\mathbf{u}\|^2 + 2p \mathbf{u}^T\mathbf{x'} \nonumber \\
\implies 1 &= 1 + p(p + 2\mathbf{x'}^T\mathbf{u'})\,
\implies p = 0 \mbox{ or } p = -2\mathbf{u}^T\mathbf{x'}\,.
\end{align}
Therefore, for any given $\mathbf{x'}$, if $\mathbf{u}^T\mathbf{x} \neq 0$, there exists $\mathbf{y'} \neq \mathbf{x'}$ which has the same shapelet space representation.  In fact, there are infinitely many such $\mathbf{x'}$ for which this holds. As a demonstration, pick any $\mathbf{x'}$ which is linearly independent with all the $w-1$ rows in $C_{w-1}$ and is not a zero vector. To see that this choice works, note that if $\mathbf{x'}^T\mathbf{u} = 0$, then the nullity of the matrix $C_w'$ formed by appending $x'$ as a row to matrix $C_{w-1}$ is 1 (i.e., $\mathbf{u}$ spans the null space of $C_w$). However, as all $w$ rows of $C_w'$ are linearly independent, its rank is $w$, which violates the rank-nullity theorem.
As a result, there exists a $\mathbf{y'} \neq \mathbf{x'}$ such that $f(\mathbf{x'}) = f(\mathbf{y'})$.
\end{proof}

\section{Experimental Setting}

The chosen shapelets are: (i) `increase': $[1, 2, 3, 4]$, (ii) `surge': $[1, 2, 4, 8]$, (iii) `peak': $[1, 2, 2, 1]$, and (iv) `flat': $[0, 0, 0, 0]$. The matrix $C_w$ constructed as in Theorem~\ref{thm:sufficient}, using the chosen set of shapelets results in $\|C_w^{-1}\| = 13.1$, which is small when multiplied with functions of small $\epsilon_0, \epsilon_1$ as in Equation~\ref{eqn:diff_bound}.
Some other sets of shapelets that satisfy Property~\ref{prop:2} were also tried, and their results were not significantly different. 

The flatness was calculated by setting $m_0 = 0$, and $\beta = -\ln{0.1}/\theta$, where $\theta$ is the median of the maximum ``absolute'' slope of each time-series. Recall that the ``absolute'' slope for a given window is calculated by averaging successive differences over the window. Choosing $\beta$ in such a way ensures that a window of time-series with a median ``absolute'' slope gets a low flatness of 0.1. 

\subsection{Code}
All our code and data used in the paper are publicly available.\footnote{\url{https://github.com/scc-usc/DTW_S_apps}}. The experiments were run on a machine with Intel(R) Core(TM) i8 CPU, 3GHz, 6 cores, and 16GB RAM. 

\section{Additional Results}

\begin{figure}[!t]
    \centering
    \includegraphics[width=\columnwidth, trim = {20 0 335 0}, clip]{val_results.eps}
    \caption{Performance of DTW+S on 64 datasets.}
    \label{fig:val_results_all}
\end{figure}
\begin{figure}[!t]
    \centering
    \includegraphics[width=\columnwidth, trim = {20 0 335 0}, clip]{smooth_results.eps}
    \caption{Results obtained by DTW+S with smoothing. (a) Allowing smoothing reduces errors for many datasets. (b) Improvement on many datasets where DTW+S was worse than DTW.}
    \label{fig:smooth_results_all}
\end{figure}

\subsection{Classification}
Recall that we treat the warping window $\tau$ in DTW+S as a hyperparameter. 
For one set of results, we arbitrarily set $\tau = 10$. For another set of results (shown in the paper)
we use leave-one-out cross-validation to identify $\tau$ as a fraction of the length of the time-series $T$ from a set of values -- $\tau = \{0, 0.01T, 0.02T, \dots, 0.07T\}$. We compare our results with the reported best performance of DTW~\cite{UCRArchive}. The results are presented in Figure~\ref{fig:val_results_all}. In the scatter plot, each point represents errors on a dataset. The blue line represents the line $y=x$, i.e., when DTW+S has the same error as that of DTW. A point lying below (and to the right) of this line indicates that DTW+S error was lower than DTW error.
We observe that DTW+S outperforms DTW on 45.3\% datasets.  The two measures lead to similar errors for many datasets (along the $y=x$ line). With the windows learned through leave-one-out validation the errors improve significantly on many datasets as 
DTW+S outperforms DTW on 57.7\% datasets. ``Corr only'' represents the DTW+S measure obtained by ignoring the ``flat'' shapelet. In this case, DTW+S reduces to the set of Person correlations with the three other shapelets. 
Now, even with the learned warping window, DTW+S (corr) only outperforms DTW in 39.1\% datasets. Additionally, it produces some large errors. Correlation ignores scale completely, and small fluctuations that are not necessarily useful patterns, but noise can cause the measure to consider it similar to some other significant patterns. Thus, the \textit{flatness dimension has a significant contribution} to the performance of DTW+S.  

We also present results (Figure~\ref{fig:smooth_results_all}) highlighting the impact of smoothing vs not smoothing the time-series before classification. We observe that smoothing brings the error down for many datasets. For a few datasets, it makes the results worse. While no-smoothing (smoothing with a window of 1) is also one of the values in the hyper-parameter search, it may not be chosen in validation sometimes because of slightly worse performance in the validation set compared to some other smoothing window.
For easy reference, we also show how smoothing and no smoothing impact errors when compared against DTW.

\end{document}